\newtheorem{theorem}{Theorem}
\newtheorem{lemma}{Lemma}
\definecolor{Gainsboro}{rgb}{0.86, 0.86, 0.86}
\definecolor{newcolor}{rgb}{.8,.349,.1}
\newcolumntype{M}{@{\extracolsep{0.15cm}}>{\color{black}}c@{\extracolsep{0pt}}}%
\journal{Computer Vision and Image Understanding}
\begin{document}

\begin{frontmatter}
 
% \title{Multiparametric Depression Screening: A Synergistic Multimodal Integration of Gaze Saliency, Facial Expressions and Prosodic Features (Sameer)}

% \title{Seeing, Observing, and Hearing Depression: An AI-driven Tri-Modal Diagnostic Approach}
% 
\title{MF-GCN: A Multi-Frequency Graph Convolutional Network for Tri-Modal Depression Detection Using Eye-Tracking, Facial, and Acoustic Features}

\author[1]{Sejuti Rahman\corref{cor1}}
\ead{s.rahman@newuu.uz}
\author[3]{Swakshar Deb}
\ead{swd9tc@virginia.edu}
\author[2]{MD. Sameer Iqbal Chowdhury}
\ead{sameer.iqbal2171@gmail.com}
\author[2]{MD. Jubair Ahmed Sourov}
\ead{sourovjubair@du.ac.bd}
\author[2]{Mohammad Shamsuddin}
\ead{mdshams013@gmail.com}

\cortext[cor1]{Corresponding author.}

\address[1]{Department of Computer Science, New Uzbekistan University, Tashkent, Uzbekistan}
\address[2]{Department of Robotics and Mechatronics Engineering, University of Dhaka, Bangladesh}
\address[3]{Department of Electrical and Computer Engineering, University of Virginia, USA}

\begin{keyword}
\MSC 41A05\sep 41A10\sep 65D05\sep 65D17
\KWD Depression\sep Knowledge graph\sep Multimodal learning

%% MSC codes here, in the form: \MSC code \sep code
%% or \MSC[2008] code \sep code (2000 is the default)
\end{keyword}

\end{frontmatter}

%\linenumbers
\section{Introduction}
Depression, a major mental health disorder, refers to persistent feelings of sadness, hopelessness, and loss of interest or pleasure in activities, typically lasting for at least two weeks \cite{apa2013diagnostic}. This condition represents one of the most common mental health challenges experienced by individuals worldwide, affecting approximately 5\% of adults globally according to the World Health Organization \cite{who2021depression}. However, comprehensive reviews spanning multiple countries reveal considerable regional and demographic variations, with depressive symptoms affecting between 4.4\% and 20\% of the general population \cite{ferrari2013burden}.\par
The impact of depression extends far beyond individual suffering, with significant consequences for personal well-being and social functioning. At the individual level, those with depression often exhibit decreased productivity, compromised overall health, and increased susceptibility to various physical illnesses \cite{evans2005impact}. Additionally, individuals with depression may become vulnerable to developing other mental health disorders and physical health complications \cite{moussavi2007depression}. The consequences ripple outward to create a substantial public health concern, impacting not only affected individuals, but also their families, workplaces, and communities \cite{kessler2012effects}.
In its most severe manifestations, depression may lead to self-harm and suicide \cite{hawton2013risk}. Despite these serious implications, depression is frequently underestimated as a health concern \cite{simon2002outpatient}. This underestimation underscores the critical importance of prompt intervention and support for those affected and their broader communities.

Depression manifests in various forms, each with its own unique characteristics and challenges. Major Depressive Disorder (MDD) is perhaps the most widely recognized type, characterized by persistent feelings of sadness, hopelessness, and loss of interest in activities that once brought joy \cite{apa2013diagnostic}. %\SR{recheck this citation "Association"}. %\MSIC{Ma'am, over here I cited something from the DSM-5 test. The DSM-5 bibtex that I found online have American Psychiatric Association as the author, hence in the references and when citing, it appears as "Association" and as "Association, A P" as "American" and "Psychiatric" are treated as the first and middle name.}
%\SR{References 6 and 11 appear to be the same publication. Correct it.} 

MDD significantly impairs daily functioning, and disrupts sleep, appetite, and energy levels. Without treatment, episodes can last for months and often recur throughout life. Persistent Depressive Disorder (dysthymia), a chronic but typically less severe form of depression, involves persistent symptoms such as a consistently low mood, low self-esteem, poor concentration, and hopelessness, lasting at least two years in adults\cite{klein2010chronic}.

Apart from MDD and Dysthymia, clinical depressive disorders also include Bipolar Disorder, formerly known as manic-depressive illness, which is characterized by alternating periods of depression and mania or hypomania, with depressive episodes resembling those of MDD, but interspersed with manic or hypomanic episodes marked by elevated mood, increased energy, and sometimes reckless behavior \cite{goodwin2007manic}. Seasonal Affective Disorder (SAD) is a type of depression following a seasonal pattern, typically occurring in fall and winter, with symptoms such as low energy, oversleeping, and social withdrawal, highlighting the interplay between biological rhythms and environmental factors \cite{rosenthal1984seasonal}. Postpartum Depression, which affects women within the first year after childbirth, involves symptoms such as intense sadness, fatigue, and bonding difficulties driven by hormonal changes and the stress of caring for a newborn \cite{obrien2017postpartum}. Psychotic Depression, a severe form of depression accompanied by psychotic symptoms like delusions or hallucinations, often with themes of guilt or punishment, requires a combination of antidepressant and antipsychotic treatments due to its complexity and severity \cite{rothschild2013challenges}.

Understanding the various types of depression is crucial for effective diagnosis and treatment. Although they share some common features, each type has unique characteristics, risk factors, and recommended treatment approaches. It is important to acknowledge that individuals may not always fit neatly into one category, and some may exhibit symptoms that overlap with multiple types of depression \cite{Benazzi30062006}. Moreover, cultural factors, personal experiences, and individual differences can shape how depression manifests and is experienced. 

Given this intricate landscape of depressive disorders, our study sought to address their shared and divergent features by focusing on a key starting point: MDD. In our work, we placed particular emphasis on it, given its prominence as the most common form of depression and its frequent overlap with the symptomatology of other depressive disorders. By prioritizing the accurate diagnosis of MDD, we established a foundational framework that can streamline the identification of this condition and, with additional effort, facilitate the recognition of other types of depression in future investigations. As research in this field continues to evolve, our understanding of these disorders and their optimal treatments continue to improve, offering hope for those affected by these challenging conditions. 

Currently, the clinical detection of depression often relies on standardized tools, such as the Diagnostic and Statistical Manual of Mental Disorders, Fifth Edition (DSM-5) \cite{DSM5_1980_book} and the Patient Health Questionnaire-9 (PHQ-9) \cite{kroenke_2010_PHQ9}. The DSM-5, developed by the American Psychiatric Association, provides diagnostic criteria based on comprehensive assessment of symptoms, duration, and functional impairment---requiring at least five of nine specified symptoms (including depressed mood or loss of interest) persisting for two weeks to diagnose MDD. The PHQ-9 is a brief, self-administered screening tool that quantifies depression severity through a nine-item questionnaire aligned with DSM-5 criteria, yielding scores from 0 to 27 where higher scores indicate greater symptom severity.

However, these methods are not without limitations: the DSM-5's heavy reliance on expert judgment introduces significant subjectivity, as diagnostic decisions depend on clinician interpretation of symptom severity, functional impairment, and complex differential diagnoses, making assessments vulnerable to inter-rater variability and practitioner experience levels. Additionally, the PHQ-9, though practical, faces several challenges including subjective bias where patients may underreport or overreport symptoms due to social desirability or denial.There is also temporal limitations as it captures only a two-week snapshot potentially missing longer-term patterns, language and literacy barriers that affect comprehension across diverse populations, and somatic symptom overlap where items about fatigue, sleep, and appetite may be elevated due to medical conditions unrelated to depression. The PHQ-9 may also miss nuanced presentations of depression or overestimate severity in patients with overlapping conditions, such as anxiety or chronic illness. Despite these challenges, both tools remain fundamental in guiding diagnosis and informing treatment, underscoring the need for continued refinement in clinical practice.

% However, these methods are not without limitations: the DSM-5’s reliance on clinician judgment can introduce subjectivity, particularly when cultural or individual variations in symptom expression are overlooked, while the PHQ-9, though practical, may miss nuanced presentations of depression or overestimate the severity in patients with overlapping conditions, such as anxiety or chronic illness. Despite these challenges, both tools remain fundamental in guiding diagnosis and informing treatment, underscoring the need for continued refinement in clinical practice.

% \textcolor{orange}{While conventional diagnostic approaches rely heavily on clinician judgment, which can introduce subjective variations, modern techniques incorporate multiple data points and automated analysis systems.}
In response to these challenges, recent advances in depression diagnosis methodologies have emerged as alternatives to traditional clinical assessments, with the aim of mitigating the human bias inherent in psychological evaluations. Researchers are increasingly interested in aiding depression diagnosis utilizing data from various sources. Detection of depression using data-driven methods uses large datasets and machine learning (ML) to identify patterns of depression symptoms. Data-driven methods are superior to traditional diagnostic methods in terms of objectivity, scalability, and early identification. Data sources typically include voice and speech from call recordings, biometric data from wearable sensors (e.g., heart rate,  fMRI), behavioral data, and clinical records. ML models are trained on labeled datasets to find patterns to identify depressive symptoms early and objectively. Lin et al. \cite{lin2021depression} demonstrated the effectiveness of saliency-based diagnostic methods, achieving an accuracy rate of 90.01\%. However, relying solely on single metrics such as saliency may not provide a complete diagnostic picture.

Multimodal approaches for automatic depression detection have shown considerable promise in recent years, leveraging complementary information provided by different data sources to improve accuracy and robustness. These methods typically combine auditory and visual modalities, capitalizing on the observed differences in speech patterns and facial expressions between depressed and healthy individuals \cite{cai2020feature}. By fusing the features from multiple modalities, these approaches can capture a more comprehensive representation of depressive symptoms.
For example, Niu et al. proposed a novel spatiotemporal attention network that integrates spatial and temporal information from audio and video data, emphasizing the most relevant frames for depression detection \cite{niu2023multimodal}. Their multimodal attention feature fusion strategy demonstrated superior performance compared to unimodal methods on standard benchmarks. Similarly, Yang et al. \cite{yang2017multimodal} developed a multimodal deep learning framework that jointly learns features from audio, video, and text modalities, achieving state-of-the-art results on the AVEC2013 and AVEC2014 depression datasets.
Other researchers have explored the fusion of physiological signals with audio-visual data. For example, Katyal et al. \cite{katyal2014eeg} combined EEG signals with facial video features and showed that the addition of neurophysiological information can improve the accuracy of depression detection. 
Moreover, Dibeklioglu et al. \cite{dibeklioglu2018dynamic} integrated facial, head, and vocal prosody features into a multimodal framework, demonstrating improved performance over single-modality approaches.
The success of these multimodal methods can be attributed to their ability to capture diverse aspects of depressive behavior, from subtle changes in vocal intonation to microexpressions and body language. As sensing technologies continue to advance, integrating additional modalities such as smartphone usage patterns or social media activity may further enhance the capabilities of automatic depression detection systems \cite{mohr2017personal}.

While existing multimodal approaches have demonstrated the value of combining different data sources, the selection of specific modalities should be guided by their clinical relevance to MDD symptomatology.
In examining the diverse modalities through which depression manifests, our investigation identified eye tracking, facial expression, and audio as particularly salient for detecting MDD because of their robust correlations with MDD symptomatology. Eye tracking research demonstrates that individuals with MDD exhibit distinct patterns, such as prolonged fixation on negative stimuli and reduced attention to positive cues, signaling underlying attentional biases, and emotional dysregulation \cite{armstrong2010eye}. Analysis of facial expressions revealed that patients with MDD often show reduced expressivity, marked by fewer smiles and heightened displays of sadness or affective flattening, reflecting emotional withdrawal and blunted affect \cite{girard2014nonverbal}. Similarly, audio analysis uncovered vocal markers of MDD, including slower speech, extended pauses, and diminished pitch variation, which correspond to psychomotor slowing and persistent low mood \cite{cummins2015review}. Given these compelling associations with core MDD features, we selected these three modalities for our research, leveraging their potential to enhance diagnostic accuracy and provide a multimodal approach to understanding this disorder.

To effectively integrate information from these diverse modalities and capture their complex inter-relationships, we employ graph-based computational methods. Recent graph-based methods \cite{xia2024_multimodal_depression} 
that model relationships between different data modalities have shown promise in capturing complex interactions between behavioral markers. However, many existing approaches focus primarily on low-frequency information, potentially overlooking critical diagnostic patterns in high-frequency features. Our work addresses this limitation by developing MF-GCN (Multi-Frequency Graph Convolutional Network), a novel graph-based framework that integrates our Multi-Frequency Filter-Bank Module (MFFBM). Unlike conventional graph convolutional networks that rely on fixed low-frequency filtering, MFFBM enables learning from both low- and high-frequency spectral information, allowing for more comprehensive analysis of multi-modal depression data.

% Among the various computational approaches for multimodal depression detection, graph-based approaches for depression detection involve modeling relationships between data points as graphs, where nodes represent entities, and edges capture relationships between them. The idea is to utilize graph theory and network analysis to model and analyze relationships, patterns, and structures in the data related to depression. These approaches are particularly useful for capturing complex interactions and dependencies between variables, such as symptoms, behaviors, and brain regions, which are often interconnected in depression. Graph-based models have been widely applied to depression detection using social media data, linguistic analysis, and neuroimaging. Xia et al. \cite{xia2024_multimodal_depression} implement a multimodal graph model to combine audio, text and video data into a unified heterogeneous graph for depression detection. However, existing graph-based depression detection models have limitations, particularly their focus on low-frequency information, which may overlook critical diagnostic information contained in high-frequency features.

The contributions of this study are manifold:

\begin{itemize}
    \item We developed a gold standard depression dataset comprising 103 participants across diverse age groups (17-56 years) with ground truth PHQ-9 values provided by multiple psychiatrists. Unlike existing datasets (such as DAIC-WOZ \cite{gratch2014DAIC} and AVEC \cite{valstar2013avec, valstar2014avec} %\SR{give ref for these 3 datasets} \SH{I have added the ref for these 3 datasets.}) 
    that only collect audio and video signals, our dataset uniquely incorporates eye-tracking data that capture patients' gaze patterns, which we demonstrate as indicative of depression severity. This trimodal approach (audio, video, and eye tracking) creates a more comprehensive resource for depression detection research, enabling a more robust multimodal analysis.
    
    \item While Lin et al.~\cite{lin2021depression} pioneered saliency-based depression detection as a unimodal approach, our work is the first to systematically evaluate its effectiveness in multimodal frameworks. We demonstrate that integrating saliency maps with audio and facial video data enables significant performance improvements (a 19\% increase in F2-score compared to unimodal approach) by capturing complementary information across modalities and enhancing the detection of subtle depression indicators that single-modality approaches might miss.
    
    \item We propose MF-GCN (Multi-Frequency Graph Convolutional Network) with our novel Multi-Frequency Filter-Bank Module (MFFBM) to address a critical limitation in existing graph-based depression detection models that focus primarily on low-frequency information. Through empirical validation, we demonstrate that high-frequency features contain critical diagnostic information that current approaches overlook. We provide theoretical analysis with mathematical proof that MFFBM enables learning arbitrary spectral filters, transcending the fixed filtering capabilities of conventional graph convolutional networks. Comprehensive evaluation against seven traditional machine learning methods and three state-of-the-art multimodal deep learning approaches demonstrates that MF-GCN achieves 96\% sensitivity and 0.94 F2-score in binary depression classification, consistently outperforming all baselines across both binary and three-class classification tasks.
\end{itemize}

\section{Related works}

We sifted through and studied relevant scholarly works on computational approaches used to detect depression. We organized our review by grouping the various works by the modalities they explore: eye tracking, facial expression, audio, and those that employ multimodal approaches to incorporate them.

\subsection{Depression detection from eye tracking modality}

Visual attention patterns and eye movements can provide significant insight into mental health conditions, particularly depression. Research has shown that emotional states significantly influence how individuals view and process visual information, particularly when they are engaged in emotional content. Early studies focused on basic eye movement features and manual analysis. For example,  Li et al. conducted one of the first systematic studies using eye-tracking data for depression detection, analyzing fixations, saccades, and pupil size from 34 participants \cite{li2016classification}. Their method achieved 81\% accuracy, but was limited by the small dataset size. Zeng et al. \cite{zeng2018study} expanded this work by studying eye movement patterns during free-viewing tasks with 36 participants, reaching an accuracy of 76.04\%, although still constrained by limited data.
Pan et al. demonstrated that combining reaction time and eye movement data significantly improved results \cite{pan2019depression}. Using a larger dataset of 630 participants, they extracted features based on subjects' responses to positive and negative image attributes. Although the method achieved 72.22\% accuracy, the larger dataset unveiled challenges in maintaining high accuracy across a more diverse population.

Lin et al. \cite{lin2021depression} addressed these limitations by introducing a novel approach combining eye movement data with image semantic understanding. Their study of 181 participants (106 depressed, 75 non-depressed) utilized the Open Affective Standardized Image Set (OASIS) (\cite{kurdi2017introducing}) image library, presenting paired positive and negative emotional images while recording eye movements. By combining visual saliency with semantic image analysis, they achieved an improved accuracy of 90.06\%, demonstrating that understanding where subjects look and what they are looking at provides better insight into depressive states. However, they dealt with only the first three fixation points, which may not always be conclusive, given how the nature of the static images shown to the subjects can vary. Opting for more fixation points by allowing the image to be shown longer could have helped to create better fixation and saliency maps.

Another notable approach by Zhu et al. \cite{zhu2019toward} proposed a Content-Based Ensemble Method (CBEM) that utilized raw eye tracking metrics and EEG data. Their method analyzed 87 direct eye-movement features, including fixations, saccades, pupil size, and dwell time, without focusing on visual attention patterns. Using an ensemble classification approach that divided data based on emotion types, they achieved 82.5\% accuracy with eye tracking data and 92.73\% accuracy with EEG data on a smaller dataset of 36 participants. Although this method did not analyze the semantic content of viewed images, its use of multiple bio-signals and ensemble learning demonstrated the potential of combining different physiological measurements for depression detection. However, their limited dataset size and lack of detailed feature descriptions make it difficult to fully assess the generalizability of the method.

These computer vision-based approaches have revealed consistent patterns: depressed individuals tend to focus more intensely on specific regions (particularly faces in images), whereas healthy individuals exhibit broader scanning patterns that include environmental elements. However, eye movement patterns can be influenced by various factors unrelated to depression such as fatigue, medication, and individual viewing habits. Additionally, the requirement for specialized eye tracking equipment can limit the widespread application of these methods in clinical settings. While our approach doesn't eliminate these fundamental challenges of eye-tracking, our research addresses these limitations by incorporating additional modalities—facial expression and audio analysis—to provide complementary information when eye-tracking data alone might be insufficient or compromised.

\subsection{Depression detection from audio modality}

Acoustic features can provide valuable insights into the clinical identification and diagnosis of mental states as well as depression. Studies have shown that emotional states can profoundly influence the function and structure of the vocal system, as reflected by the rhythm and prosody of the voice \cite{shin2024use}. Early research utilized feature selection and statistical methods to detect depression. Moore et al. \cite{moore2007critical} collected speech data from depressed patients and controls, applied a feature selection strategy related to prosody and vocal tract, and identified glottal features as consistent depression indicators. Later, machine learning enabled the extraction of more complex and dominant speech features. Ma et al. introduced DeepAudioNet to extract features related to depression from vocal channel using Convolutional Neural Network (CNN) and Long ShortTerm Memory (LSTM) \cite{ma2016depaudionet}. Higuchi et al. collected daily phone call data for monitoring mental health changes and depressive states detection from 1,814 adults. The study found that the speech energy levels of depressive participants were significantly lower than those of healthy participants \cite{higuchi2020effectiveness}. Wang et al. utilized voice acoustic features to train an Artificial Neural Network (ANN) for depression score prediction \cite{wang2023fast}. Abbas et al.  studied the effects of antidepressant therapy on 18 MDD subjects using visual and auditory data. Smartphone tasks capture facial expressions, vocal patterns, and head movements, revealing significant changes in speech activity with treatment initiation \cite{abbas2021remote}. The work by Huang et al. \cite{huang2020domain} proposed multiple adaptation strategies that enhance pre-trained models based on a dilated CNN framework, improving depression detection from audio recordings, both free speech and directed speech, in controlled and natural environments. Experiments on two depression datasets demonstrated that adapting CNN feature representations to environmental changes and increasing data quantity and quality during pre-training significantly boost performance. Bilal et al. \cite{bilal2022predicting} collected information by smartphone application and developed a predictive model utilizing voice acoustic features such as pitch, speed, and timing to predict perinatal depression. 

Detecting depression in audio data is a promising but challenging area of research. Symptoms such as monotone speech, slower speech rate, or reduced pitch variability may not always be present or overlap with other conditions (e.g., fatigue and stress). Speech patterns and emotional expressions vary across cultures and languages, making it difficult to generalize models trained on one population to another.

\subsection{Depression detection from visual modality}

Researchers have explored the intricate relationship between emotion and the identification of depression, with significant insights drawn from studies like the one by Joormann and Gotlib \cite{joormann2010emotion}. This study established that depression is associated with deficits in cognitive control, particularly in the domain of emotion regulation, where individuals exhibit pronounced difficulty in inhibiting the processing of negative information. This cognitive bias manifests as a persistent tendency to dwell on negative thoughts and emotions, influencing observable behaviors and expressions. Machine learning capitalizes on this connection by analyzing patterns in diverse data sources, such as voice, text, and physiological signals, to detect subtle indicators of depression. For instance, the negative bias highlighted by Joormann and Gotlib may surface in the use of pessimistic language or altered vocal tones in speech, as well as in physiological responses tied to emotional dysregulation. Machine learning facilitates early detection, personalized treatment, and enhanced diagnostic accuracy using training algorithms to recognize these patterns. Furthermore, extensive research has leveraged vast amounts of user-generated social media data, where linguistic and behavioral cues reflective of cognitive and emotional challenges can be monitored, offering a scalable approach for depression detection and intervention. For example, Reece et al. \cite{reece2017instagram} analyzed Instagram photos, finding that depressed users tend to post images that are bluer, grayer, and darker, achieving an accuracy of 70\% in detecting depression.

Facial emotion recognition (FER) has emerged as a robust modality for detecting depression, leveraging the visible emotional cues tied to cognitive biases. Du et al. \cite{du2019encoding} introduced an attention mechanism to extract discriminating visual information from facial features, enhancing the detection of subtle emotional shifts. Similarly, Hu et al. \cite{hu2023detecting} collected facial expressions from 62 participants while viewing visual stimuli and achieved a reliable classification between healthy and depressed subjects. Islam et al. \cite{Islam2024FacePsy} developed FacePsy, an open-source mobile system that uses a phone’s camera to analyze facial micro-features—such as Action Units (AUs), eye movements, and head gestures—during daily use, reporting an AUROC of 81\% in detecting depressive episodes among 25 participants. Expanding this scope, Yang et al. \cite{yang2019facial} employed a deep learning framework to analyze dynamic facial expressions in video sequences, capturing temporal patterns, such as reduced smile duration and eye contact, which improved detection sensitivity. Zhou et al. \cite{zhou2020depression} proposed a Deep Convolutional Neural Network (DCNN) with a Global Average Pooling layer, focusing on salient facial regions like the eyes and forehead to assess depression severity, achieving high accuracy in controlled settings.

Additional advancements in FER further refine its diagnostic potential. Guo et al. \cite{guo2022multi} introduced a multi-task learning model that simultaneously detects depression and estimates its severity using facial landmarks and expression dynamics, outperforming single-task approaches in real-world datasets. Zhu et al. \cite{zhu2018automated} developed an automated system integrating 3D facial modeling with FER, revealing that subtle depth changes in facial expressions—such as flattened affect—correlate strongly with depressive states. These studies highlight FER’s ability to capture both static and dynamic emotional markers, offering a comprehensive view of depression’s behavioral footprint. He et al. \cite{he2016deep} applied a residual learning framework to enhance FER, demonstrating that deep architectures excel at identifying nuanced emotional patterns linked to mental health conditions, while Jan et al. \cite{jan2018automatic} combined CNNs with temporal feature analysis to track expression changes over time, adding a longitudinal dimension to depression detection.

FER’s integration into a multimodal approach significantly enhances depression detection by combining independent and overlapping features across data types. Standalone, FER provides unique insights through micro-expressions and gaze patterns that reflect emotional suppression or negativity specific to depression. When paired with modalities such as speech or text, it overlaps with features such as slow speech tempo or negative word choice, reinforcing diagnostic confidence. This synergy leverages FER’s strengths—its non-invasive, real-time applicability—while compensating for limitations in other modalities, such as context dependency in text analysis. By synthesizing these diverse signals, a multimodal framework achieves greater robustness and reliability, paving the way for scalable, personalized mental health solutions.

\subsection{Multi-modal approach}

\

 Applying multiple data modalities and their fusion in machine learning and deep learning applications has yielded more optimal outcomes across many fields. These systems can capture a broader range of behavioral cues and improve the robustness of depression detection. Work by Stasak et al. \cite{stasak2019investigation} improved the classification between mild and major depression by integrating emotional and acoustic features and discovered that combining emotional scores extracted from linguistic stress and vowel articulation, obtained from speech, can enhance the accuracy of depression recognition using voice analysis alone. Yin et al. \cite{yin2019multi} proposed a multi-modal approach using a layered recurring neural network to combine visual, auditory, and textual data for detecting depression symptoms. Further efforts by Niu et al. \cite{niu2023multimodal} improved this approach by implementing a novel spatiotemporal attention network on audio and visual data for better performance. 

% \MSIC{\cite{huang2020domain} proposed multiple adaptation strategies that enhance pre-trained models based on a dilated convolutional neural network (CNN) framework, improving depression detection from audio recordings, both free speech and directed speech, in controlled and natural environments. Experiments on two depression datasets demonstrated that adapting CNN feature representations to environmental changes and increasing data quantity and quality during pre-training significantly boost performance.} 

% \SR{(Which modalities are used in Huang et al.'s work?)} \MSIC{(Ma'am, I went over that work and realized that I had mistakenly added it here. The work deals with only audio recordings so I shifted it to the section dealing with acoustic features.)} 

Recent research has extensively explored multi-modal approaches for depression detection, leveraging diverse physiological data sources such as heart rate variability, electroencephalography (EEG), and cortisol levels to improve diagnostic accuracy and provide more comprehensive insights into mental health conditions. Katyal et al. \cite{katyal2014eeg} combined EEG signals with facial video features for improved performance.

Zhou et al. \cite{zhou2024multi} proposed a Multi Fine-Grained Fusion Network (MFFNet) that effectively captures behavioral changes at different scales by fusing speech and text features from interviews. Their model employs a Multi-Scale Fastformer to capture correlations between temporal features, and uses a Gated Fusion Module to dynamically weight and combine features from different modalities. Their experiments on two depression interview datasets showed superior performance compared to unimodal approaches. \cite{cai2020feature} explored the feature-level fusion of EEG data under different audio stimuli (neutral, negative, and positive), demonstrating that multimodal EEG feature fusion could achieve higher classification accuracy (86.98\%) compared to single modality approaches. They used linear and nonlinear EEG signal features and employed genetic algorithms for feature selection.

In their works, Alghowinem et al. \cite{alghowinem2018multimodal} developed a multimodal approach combining paralinguistic features, head pose, and eye-gaze behaviors. Their fusion analysis showed that feature-level fusion performed best with up to 88\% accuracy, demonstrating these behavioral indicators' complementary nature for depression detection. In another work, Chen et al. \cite{chen2023ms2} proposed MS2-GNN, a novel graph neural network framework that explores modal-shared and modal-specific characteristics among various psychophysiological modalities while investigating potential relationships between subjects. Their approach handles heterogeneity and homogeneity among modalities while maintaining inter-class separability and intra-class compactness. 

Zhou et al. \cite{zhang2024multimodal} introduced AVTF-TBN, an innovative framework combining audio, video, and text features through a three-branch network architecture. Their approach utilizes specialized branches for each modality and implements a multimodal fusion module based on attention mechanisms, achieving superior performance in real-world scenarios. Tao et al. \cite{tao2023depmstat} developed DepMSTAT, which integrates spatio-temporal attention mechanisms with multimodal fusion strategies. Their framework effectively captures the temporal dynamics of facial expressions and speech patterns while maintaining the semantic consistency of textual information through a sophisticated fusion network.

\subsection{Available Datasets}

Effective depression recognition requires sufficient data to train discriminative models, but data collection is challenging due to the sensitive nature of depression. Various research groups have created their own databases to study depression assessment tools. Here we introduce the commonly used datasets on depression detection. The Distress Analysis Interview Corpus/Wizard-of-Oz (DAIC-WOZ) dataset \cite{gratch2014DAIC} contains voice and text samples from interviews with 189 healthy and control subjects, along with their PHQ-8 \cite{kroenke2009phq8} depression questionnaires to support the diagnosis of psychological disorders such as post-traumatic stress disorder, depression, and anxiety. This dataset is widely used in research for text-based, voice-based, and multimodal depression detection studies. 

The E-DAIC (Extended Distress Analysis Interview Corpus) is an extension of the original DAIC-WOZ dataset that includes audio, video, and textual data collected from clinical interviews conducted with participants, some of whom have been diagnosed with depression or other mental health conditions. This dataset comprises more than 73 hours of interview data from 275 subjects. 

The AVEC2013 dataset is a collection from the audiovisual depressive corpus. It contains 340 video recordings of 292 individuals interacting with a computer system. The participants in this study ranged from 18 to 63 years old, with an average age of 31.5 years \cite{valstar2013avec}. 

The AVEC2014 dataset is an updated version of  AVEC2013 containing more data samples \cite{valstar2014avec}.While the AVEC2013 focuses primarily on emotion recognition, the AVEC2014 includes audio, video, and textual transcriptions of interviews, along with depression severity labels based on PHQ-8.

The Multi-modal Open Dataset for Mental-disorder Analysis (MODMA) includes EEG and speech data from clinically depressed patients and matched controls, annotated by professional psychiatrists \cite{cai2022_MODMA_dataset}. It features 128-electrode EEG signals from 53 participants, 3-electrode EEG signals from 55 participants, and audio recordings from 52 participants captured during interviews, reading tasks, and picture descriptions.

These datasets have facilitated the ongoing research to develop automated tools for depression detection. Unfortunately, they lack diversity in age, culture, ethnicity, and accent, which are critical in modeling real-world emotional or depressive expressions. Participants are mostly young adults and native English speakers, limiting demographic diversity (e.g., age, ethnicity, cultural background). Additionally, none of these datasets contains eye-tracking data, whose usefulness has been proven in diagnosing cognitive disorders. So we have created our own dataset by collecting audio, video, and eye tracking modalities from native citizens. The detailed data collection setup and inclusion criteria have been explained in the \textbf{Study Design} section.   

\section{Tripartite Data Approach}

\begin{table*}[!ht]
\centering
\begin{threeparttable}
\caption{Descriptive Statistics for Emotions in Depression}
\small
\begin{tabular}{|l|cc|cc|cc|}
\hline
\rowcolor[rgb]{0.6,0.682,1} & \multicolumn{2}{c|}{No Depression} & \multicolumn{2}{c|}{Mild to Moderate Depression} & \multicolumn{2}{c|}{Severe Depression} \\
\hline
\rowcolor[rgb]{0.6,0.682,1} Emotion & Mean & Std. & Mean & Std. & Mean & Std. \\
\hline
angry*** & 0.031 & 0.032 & 0.055 & 0.031 & 0.089 & 0.057 \\
disgust & 0.000 & 0.001 & 0.001 & 0.001 & 0.003 & 0.009 \\
fear** & 0.039 & 0.028 & 0.060 & 0.034 & 0.095 & 0.070 \\
happy & 0.080 & 0.078 & 0.117 & 0.113 & 0.065 & 0.108 \\
sad*** & 0.055 & 0.055 & 0.110 & 0.054 & 0.157 & 0.077 \\
surprise*** & 0.018 & 0.018 & 0.008 & 0.005 & 0.040 & 0.031 \\
neutral*** & 0.777 & 0.135 & 0.651 & 0.118 & 0.551 & 0.178 \\
\hline
\end{tabular}
\begin{tablenotes}
\small
\item * p-value $<$ 0.05; ** p-value $<$ 0.01; *** p-value $<$ 0.001
\end{tablenotes}
\label{tab:emotion_features}
\end{threeparttable}
\end{table*}

\subsection{Eye Tracking}

Eye tracking captures gaze patterns by measuring the precise locations where individuals fixate their visual attention. Using infrared cameras, eye trackers detect and record pupil position, enabling analysis of fixation patterns, saccadic movements, and pupil dilation. These eye movement metrics provide objective measures of attentional processes that are largely involuntary and resistant to conscious manipulation \cite{rahman2021classifying}.

For depression detection, eye tracking is particularly valuable because individuals with MDD exhibit distinct gaze patterns that reflect underlying attentional biases and emotional dysregulation. As established in our literature review, depressed individuals demonstrate prolonged fixation on negative stimuli and reduced attention to positive cues, making eye tracking a clinically relevant modality for MDD assessment. The technology's non-invasive nature, relatively low cost, and ability to capture subconscious attentional processes without requiring verbal responses make it well-suited for objective depression screening \cite{ahonniska2018assessing}.

\subsection{Facial Expression from Video Data}

% Human interactions are complex, entailing a blend of verbal and nonverbal signals. Among nonverbal cues, facial expressions are preeminent. They provide a quasi-universal medium that exceeds the limits of spoken language. Expressions like a furrowed brow or a beaming smile transmit emotions with notable precision. Given this expressive power, facial expressions have become a focal point for analysis in video data \cite{pampouchidou_2017_depression_visual_cues}.

Facial expressions provide observable manifestations of emotional states through coordinated movements of facial muscles. These nonverbal signals convey affective information that can be quantified and analyzed to assess emotional functioning. In the context of depression, facial expression analysis is particularly relevant because MDD is characterized by distinct patterns of reduced expressivity, including decreased smiling, heightened displays of sadness, and affective flattening—observable markers that reflect the emotional withdrawal and blunted affect central to the disorder \cite{girard2014nonverbal}.

Modern computer vision approaches, particularly deep learning-based Facial Emotion Recognition (FER) systems, enable automated extraction and quantification of emotional expressions from video data. These systems employ convolutional neural networks trained on large datasets of facial images to detect and classify emotions across multiple categories (e.g., happiness, sadness, anger, fear, disgust, surprise, and neutral). By analyzing frame-by-frame facial configurations and their temporal dynamics, FER systems provide objective measures of emotional expression patterns that complement self-report assessments. Unlike subjective clinical observations, automated facial analysis offers consistent, quantifiable metrics of affective display, making it a valuable tool for depression assessment \cite{pampouchidou_2017_depression_visual_cues}.

For the extraction of emotional information from the subject`s facial expressions, we employ FER \cite{fer}, an open-source Python library that utilizes a pre-trained face expression recognition model. The FER model processes individual video frames, predicting the emotional state of detected faces across seven categories: anger, disgust, fear, happiness, neutrality, sadness, and surprise. Our approach mirrors the audio feature extraction process; we first extract emotion scores for each frame within a subject's interview with the PHQ-9 administrators, then aggregate these frame-level features to the video level by computing the mean score for each emotion category \cite{koldyk2018detecting}. This process yields seven visual features to complement the eight audio features previously extracted. It's important to note that our analysis is confined to frames where a single face is detected, ensuring the clarity and reliability of our emotional assessments \cite{li2020deep}.

From the video recordings of the subjects' interactions, the features extracted are summarized in Table \ref{tab:emotion_features}.

\textbf{Anger:} Our results indicate a significant increase in anger expression as depression severity increases. This aligns with findings by \cite{carvalho2013anger}, who noted that individuals with depression often exhibit more anger, potentially due to feelings of frustration and helplessness associated with their condition.

\textbf{Disgust:} While our data shows a slight increase in disgust expression with depression severity, the difference is not statistically significant. This is consistent with \cite{zwick2017differences}, who found that disgust recognition, rather than expression, was more notably affected in depressed individuals.

\textbf{Fear:} We observed a significant increase in fear expression correlated with depression severity. This supports the findings from \cite{dalili2015meta}, that depressed individuals tend to show heightened sensitivity to negative emotions, including fear.

\textbf{Happiness:} Interestingly, our data shows a non-linear relationship between happiness expression and depression severity, with the mild to moderate group showing the highest mean. This complex relationship is echoed in \cite{girard2014nonverbal}, suggesting that some depressed individuals may engage in ``smiling depression" as a coping mechanism.

\textbf{Sadness:} As expected, sadness expression significantly increases with depression severity. This is widely supported in literature, including \cite{rottenberg2005mood}, who noted persistent sad mood as a hallmark of depression.

\textbf{Surprise:} Our data shows a significant non-linear relationship between surprise expression and depression severity. This complex pattern might be related to findings by \cite{leppanen2004emotional}, who noted altered emotional processing in depression, affecting recognition and expression of emotions like surprise.

\textbf{Neutral:} We observed a significant decrease in neutral expression as depression severity increases. This aligns with \cite{reed2007inhibition}, who found that depressed individuals often have difficulties in maintaining neutral expressions, tending towards negative emotional displays.

These findings collectively support the notion that facial expressions can serve as valuable indicators of depression severity, aligning with broader research on emotional expression in mental health contexts \cite{cohn2009detecting}.

\subsection{Acoustic Features from Audio Data}

\begin{table}[htbp]
\centering
\caption{Statistical Analysis of Top Five Features from PCA component 1}
\label{tab:mel_features_stats}
\begin{tabular}{lcccccc}
\toprule
Feature & \multicolumn{2}{c}{No Dep.} & \multicolumn{2}{c}{Mild-Mod.} & \multicolumn{2}{c}{Severe} \\
\cmidrule(lr){2-3} \cmidrule(lr){4-5} \cmidrule(lr){6-7}
& Mean & Std & Mean & Std & Mean & Std \\
\midrule
mel 38$^{*}$ & 0.15 & 0.14 & 0.36 & 0.25 & 0.36 & 0.23 \\
mel 39$^{**}$ & 0.12 & 0.11 & 0.34 & 0.23 & 0.33 & 0.20 \\
mel 42$^{***}$ & 0.08 & 0.10 & 0.32 & 0.25 & 0.32 & 0.22 \\
mel 43$^{**}$ & 0.09 & 0.10 & 0.31 & 0.24 & 0.30 & 0.21 \\
mel 52$^{**}$ & 0.04 & 0.04 & 0.14 & 0.13 & 0.13 & 0.09 \\
\bottomrule
\end{tabular}
\begin{tablenotes}
\small
\item Significance: $^{*}p < 0.05$, $^{**}p < 0.01$, $^{***}p < 0.001$.
\end{tablenotes}
\end{table}

Although facial expressions are often considered a primary conduit to understanding human emotions, vocal expressions also serve as a significant indicator of emotional states. Beyond the denotative content of spoken language, paralinguistic features, such as intensity (loudness), pitch, and speech irregularities, including vocal tremors, convey a rich tapestry of emotional nuances. Additionally, the temporal aspects of speech, including sentence length and rhythm, juxtaposed with non-lexical vocalizations like sighs, provide deeper insight into the speaker's emotional psyche \cite{almaghrabi_2023_depression_acoustic_audio}. 

Despite the richness of this data, the ability of individuals to consciously discern and interpret these subtle acoustic markers is typically suboptimal. The expertise required for accurate emotional assessment from speech is often reserved for trained audio analysts. Within this context, advancements in artificial intelligence, specifically deep learning algorithms, have shown promise in bridging this gap. These algorithms excel at extracting complex patterns from extensive datasets—in this instance, audio samples correlated with emotional states, enabling the prediction of emotions with considerable accuracy \cite{xia2024_multimodal_depression}. These sophisticated models can identify distinctive patterns corresponding to various emotional expressions by meticulously examining large-scale acoustic data. The implications of this technological feat are vast, offering transformative potential across disciplines, from enhancing human-computer interaction to supporting mental health professionals in diagnostic procedures. The progression in deep learning applied to emotional recognition is discussed in detail in \cite{el2011survey}, whose study provides evidence for the significant predictive power of these algorithms.

%\JA{The following texts should go to the data collection setup subsection.{updated}}
%Upon obtaining informed consent from participants, audio recordings of the interactions between the subjects and researchers are captured via audio recording device. These recordings are conducted during the administration of the PHQ-9 assessment. Researchers engage in extended dialogues with participants to collect sufficient duration of audio samples. This approach facilitates a more comprehensive dataset from which many vocal features can be extracted, enhancing the subsequent analysis by the deep learning model. The extended interaction allows the algorithm to discern a broader range of acoustic patterns, thereby improving the accuracy and reliability of emotion recognition.}}%

The audio recordings from the patients, which were converted into the .wav format, were cleaned of any background noise using Ultimate Voice Remover (UVR) 5.6.0 \cite{uvr2023}, an advanced open-source tool designed for vocal isolation and removal from audio tracks. Afterward, each of the audio files was subjected to speech diarization. This utilized the PyAnnote library \cite{bredin2020pyannote} for speaker diarization, employing the pre-trained ``pyannote/speaker-diarization-3.1" model.

For the extraction of acoustic features from the audio recordings, we utilize OpenSmile, an open-source audio processing toolkit, in conjunction with the extended Geneva Minimalistic Acoustic Parameter Set (eGeMAPS) \cite{eyben2015geneva}. Our selection of acoustic characteristics is informed by previous studies on the vocal patterns of individuals with depression \cite{al2018detecting,ellgring1996vocal,fuller1992vocal,jia2019detecting,solomon2015vocal,tamarit2008study,vicsi2012depression,wang2019detecting}. We extract eight low-level acoustic descriptors (LLDs), including Loudness, Fundamental Frequency (F0), and Spectral Flux, among others.

Table \ref{tab:acoustic_features} gives a summary of the acoustic features extracted from the audio recordings of the interviews with the subjects. We further explain the features and the trends noticed in these features:

\begin{table*}[!ht]
\centering
\begin{threeparttable}
\caption{Descriptive Statistics for Audio Features in Depression}
\label{tab:acoustic_features}
\small
\begin{tabular}{|l|cc|cc|cc|}
\hline
\rowcolor[rgb]{0.6,0.682,1} & \multicolumn{2}{c|}{No Depression} & \multicolumn{2}{c|}{Mild to Moderate Depression} & \multicolumn{2}{c|}{Severe Depression} \\
\hline
\rowcolor[rgb]{0.6,0.682,1} Audio Feature & Mean & Std. & Mean & Std. & Mean & Std. \\
\hline
Loudness* & 0.981 & 0.207 & 1.205 & 0.364 & 1.192 & 0.341 \\
F0*** & 28.480 & 3.070 & 31.593 & 4.060 & 34.571 & 3.806 \\
HNR*** & 3.630 & 1.243 & 4.300 & 1.904 & 5.986 & 1.815 \\
Jitter** & 0.045 & 0.013 & 0.0466 & 0.012 & 0.039 & 0.010 \\
Shimmer*** & 1.388 & 0.100 & 1.353 & 0.110 & 1.263 & 0.111 \\
F2** & 1519.730 & 81.623 & 1548.213 & 85.155 & 1590.002 & 89.143 \\
Hammarberg Index*** & 0.077 & 0.021 & 0.094 & 0.018 & 0.100 & 0.014 \\
Spectral Flux* & 0.728 & 0.231 & 0.925 & 0.329 & 0.877 & 0.282 \\
\hline
\end{tabular}
\begin{tablenotes}
\small
\item * p-value $<$ 0.05; ** p-value $<$ 0.01; *** p-value $<$ 0.001
\end{tablenotes}
\end{threeparttable}
\end{table*}

\textbf{Loudness and Fundamental Frequency (F0):} Our analysis reveals that both loudness and fundamental frequency (F0) are significantly higher in the depression group compared to the non-depression group ($p < 0.05$). This finding aligns with previous research by \cite{mundt2012voice}, who observed changes in speech amplitude and pitch associated with depression severity. The increase in loudness may seem counterintuitive, as depression is often associated with reduced energy. However, \cite{ciftci2018analysis} suggests that this could be related to the effort required to maintain normal speech patterns under depressive conditions. The elevation in F0 might reflect increased tension in the vocal folds, a physiological response potentially linked to the stress and anxiety often tag along with depression \cite{cannizzaro2004voice}.

\textbf{Harmonics-to-Noise Ratio (HNR):} Our results indicate a significant increase in HNR with depression severity ($p < 0.001$). This finding is particularly interesting as it contrasts with some previous studies. For instance, \cite{alghowinem2013comparative} reported a decrease in HNR among depressed individuals. Our observation of increased HNR might suggest a compensatory mechanism in depressed speech, where individuals unconsciously strive for clearer articulation to counteract perceived communicative deficits. This discrepancy highlights the complex nature of vocal changes in depression and warrants further investigation into potential subtype-specific effects or methodological differences across studies.

\textbf{Jitter and Shimmer:} Contrary to some previous findings, our data shows a significant decrease in both jitter and shimmer with increasing depression severity ($p < 0.01$ for both). This stands in contrast to studies like \cite{cannizzaro2004voice} and \cite{ozkanca2019evaluation}, which reported increases in these perturbation measures. Jitter, which quantifies cycle-to-cycle variations in fundamental frequency, and shimmer, which measures amplitude variations, are typically associated with perceived voice roughness. Our unexpected finding of decreased jitter and shimmer might indicate a form of vocal control or tension in depressed speech, possibly reflecting the flattened affect often observed in clinical depression. These results emphasize the need for nuanced interpretation of acoustic features in the context of depression.

\textbf{Second Formant (F2):} Our analysis reveals a significant increase in F2 frequency with depression severity ($p < 0.01$). This aligns with findings from \cite{scherer2015self}, who noted alterations in formant frequencies in depressed speech. F2 is particularly associated with the front-back position of the tongue and the degree of lip rounding. An increase in F2 could suggest a tendency towards more frontal articulation in depressed speech, possibly reflecting changes in muscle tension or reduced articulatory effort. This shift in F2 may contribute to the perception of ``flatter" or less animated speech often described in depressed individuals.

\textbf{Hammarberg Index:} We observe a significant increase in the Hammarberg Index with depression severity ($p < 0.001$). This index, which measures the difference in energy between lower (0-2 kHz) and higher (2-5 kHz) frequency bands, has been less commonly reported in depression studies. However, our findings are consistent with the work of \cite{vicsi2012depression}, who utilized this measure in their analysis of depressed speech. The increase in Hammarberg Index suggests a shift in spectral energy distribution, potentially reflecting changes in vocal tract tension or alterations in phonation patterns associated with depressive states. This could contribute to the perceived ``strained" or ``tense" quality sometimes noted in the speech of depressed individuals.

\textbf{Spectral Flux:} Our results show a significant increase in Spectral Flux with depression severity ($p < 0.05$). This measure, which quantifies the frame-to-frame spectral change in the speech signal, aligns with findings reported in the comprehensive review by \cite{cummins2015review}. The increase in Spectral Flux might indicate greater instability or variability in the spectral characteristics of depressed speech. This could be related to subtle irregularities in vocal fold vibration or changes in articulatory precision, both of which might be influenced by the psychomotor changes associated with depression. The elevated Spectral Flux might contribute to the perception of depressed speech as less ``smooth" or more ``effortful" compared to non-depressed speech.

\subsection{Deep Learning-Based Feature Representations}

\ref{tab:acoustic_features}, we extract complementary spectral representations that enable our deep learning model to automatically learn hierarchical patterns related to depression. While the hand-crafted features (F0, jitter, shimmer, HNR, etc.) provide interpretable, clinically meaningful measures with established statistical significance, modern neural architectures benefit from accessing richer, less-compressed representations of the audio signal \cite{purwins2019deep, trigeorgis2016adieu}. Rather than discarding the insights from traditional features, we leverage spectral representations that implicitly encode similar acoustic properties while preserving additional information that may be relevant for depression detection \cite{ma2016depaudionet,zhao2019speech}. Specifically, we compute three complementary feature sets: chroma, Mel spectrograms, and MFCCs, with each capturing different aspects of the speech signal that relate to the acoustic parameters discussed above.

\textbf{Chroma feature} represents the distribution of spectral energy across the 12 pitch classes (C, C\#, D, ..., B), effectively combining the harmonic content and providing octave-invariant information that relates strongly to pitch (F0) and indirectly to vocal tract characteristics (F2) and loudness fluctuations. A chromagram $C(t, k)$ at time $t$ and pitch class $k$ is formally computed by summing the Short-Time Fourier Transform (STFT) magnitude $|X(t, f)|$ over all frequencies $f$ mapped to chroma bin $k$:
\[
C(t, k) = \sum_{f \in \mathcal{F}_k} |X(t, f)|
\]
where $\mathcal{F}_k$ denotes frequencies assigned to chroma bin $k$ \cite{muller2005chroma,mueller2011chord}. By capturing the pitch class distribution, chroma features provide a complementary view of the fundamental frequency variations (F0) observed in Table \ref{tab:acoustic_features}, while maintaining robustness to octave shifts that may occur due to speaker variability.

\textbf{Mel spectrograms} project the power spectrum onto the Mel scale, approximating human auditory sensitivity. Formally, the Mel spectrogram $M(t, m)$ at time $t$ and Mel band $m$ is:
\[
M(t, m) = \sum_{f \in \text{Mel}(m)} |X(t, f)|^2
\]
This captures spectral envelope information strongly related to loudness (by total energy), spectral flux (temporal change of $M(t, :)$), and can reflect shimmer and HNR through local variance and periodicity in the bands \cite{shao2018cleanmel,ravin2023genai}. The time-frequency resolution of Mel spectrograms preserves the temporal dynamics of the acoustic features analyzed earlier, enabling the neural network to learn patterns in loudness fluctuations, spectral changes, and harmonic structures that correlate with depression severity.

\textbf{MFCCs} (Mel Frequency Cepstral Coefficients) compact the Mel spectrogram by taking the discrete cosine transform (DCT) of the log Mel power. The $n$-th MFCC at time $t$ is:
\[
\text{MFCC}_n(t) = \sum_{m} \log M(t, m) \cos\left[\frac{\pi n}{M}(m + 0.5)\right]
\]
MFCCs therefore encode spectral envelope and are correlated with timbral aspects (shaped by vocal tract resonances like F2), as well as summarising periodic changes relevant to F0 and HNR \cite{telmem2025impact,srivastava2013speech,logan2000mel}. The lower-order MFCCs capture the broad spectral shape influenced by formant frequencies (including F2), while higher-order coefficients encode finer spectral details related to voice quality measures such as jitter and shimmer.

By merging pitch class (chroma), perceptual spectrum (Mel), and envelope/timbre (MFCCs), these features allow a deep learning model to internalize traditional acoustic parameters, including F0, loudness, jitter, shimmer, and spectral flux, holistically and efficiently. Rather than relying on single-valued summaries, these representations preserve the temporal evolution and spectral richness of the speech signal, blending rich acoustic information into three learnable domains well-suited for end-to-end modeling \cite{korzeniowski2016deep,mueller2018stft}. This approach enables the network to automatically discover complex relationships between acoustic patterns and depression severity that may not be captured by traditional hand-crafted features alone, while still maintaining implicit connections to the clinically interpretable measures presented in Table \ref{tab:acoustic_features}.
\subsubsection{Choosing Deep Learning Features Over Hand-Crafted Acoustic Features}

While traditional acoustic features such as Loudness, F0, HNR, jitter, shimmer, F2, Hammarberg Index and Spectral Flux have demonstrated statistical significance in distinguishing depression severity (as shown in Table \ref{tab:acoustic_features}), modern deep learning approaches benefit substantially from utilizing low-level spectral representations rather than these hand-crafted features. We discuss out rationale for employing Mel spectrograms and related representations in our neural network architecture down below:

\textbf{Representation Learning and Feature Hierarchies:} Deep neural networks possess the inherent capability to automatically learn hierarchical feature representations directly from raw or minimally processed audio data \cite{purwins2019deep}. Hand-crafted features like jitter and shimmer were originally designed for classical machine learning approaches (e.g., Support Vector Machines, Random Forests) that lack this automatic feature learning capability \cite{schuller2013computational}. When we provide pre-computed acoustic features to a neural network, we are essentially performing feature engineering twice: once through manual acoustic analysis, and again through the network's learned representations. This redundancy can limit the model's ability to discover novel patterns in the data \cite{rajpurkar2017cardiologist}.

\textbf{Information Preservation:} Hand-crafted acoustic features represent highly compressed summaries of the underlying audio signal. For instance, the Hammarberg Index reduces the entire spectral distribution to a single scalar value representing energy differences between frequency bands. While this compression provides interpretability, it inherently discards potentially relevant information that a neural network might leverage for classification \cite{trigeorgis2016adieu}. In contrast, Mel spectrograms preserve the temporal-spectral structure of speech, retaining rich information about formant trajectories, prosodic patterns, and fine-grained spectral variations that may correlate with depression \cite{zhao2019speech}.

\textbf{Mel-Frequency Spectrograms:} Mel spectrograms have emerged as the de facto standard for speech-based deep learning applications, including emotion recognition and clinical speech analysis \cite{gideon2017progressive, ma2016depaudionet}. These representations apply a perceptually-motivated frequency scale that mimics human auditory perception while maintaining sufficient temporal and spectral resolution for neural networks to extract discriminative patterns \cite{hershey2017cnn}. Studies specifically focused on depression detection have demonstrated that convolutional neural networks (CNNs) operating on Mel spectrograms achieve superior performance compared to models trained on traditional acoustic features \cite{ma2016depaudionet, He_2018}. Mel-Frequency Cepstral Coefficients (MFCCs), derived from Mel spectrograms through discrete cosine transformation, provide a compact representation that captures the spectral envelope while reducing dimensionality \cite{ganchev2005comparative}. While MFCCs have been widely employed in speech recognition systems \cite{rabiner1993fundamentals}, recent depression detection studies suggest that full Mel spectrograms often outperform MFCCs when sufficient training data is available, as the additional information retention justifies the increased dimensionality \cite{rejaibi2022mfcc}.

\textbf{Raw Waveform Processing:} End-to-end deep learning models that operate directly on raw audio waveforms represent the most extreme form of automatic feature learning \cite{oord2016wavenet, dai2017very}. These approaches eliminate all manual feature engineering, allowing the network complete freedom to learn task-specific representations from scratch. While computationally intensive, raw waveform models have shown promise in various audio classification tasks \cite{tokozume2017learning}, though they typically require substantially larger datasets than spectrogram-based approaches.

\subsection{Ground Truth Labeling}
The process of ground truth labeling serves as a crucial step in developing an AI tool that helps to provide a reliable methodology. This section delves into the methodologies employed in our Ground Truth Labelling process, emphasizing the screening and diagnosis phases, the tools used, and the significance of an accurate assessment. \textit{Screening Phase:} Our ground truth labelling process begins with a meticulous screening phase conducted by a seasoned clinical physiatrist. They verbally interact with the subjects, at length, to determine the presence of possible mental disorders, with a particular focus on MDD. Participants who are suspected of having even the faintest possibility of depressive disorders is passed onto the next phase. \textit{Diagnosis Phase:} Following the screening phase, the selected individuals proceed to the Diagnosis phase to further refine the labelling process. Here, two professional psychiatrists undertake a detailed examination, employing the commonly used PHQ-9 \cite{martin2006validity} as the primary tool to provide the final ground truth score.

%It is a self-report questionnaire to assess the presence and severity of depressive symptoms in individuals and a widely used screening tool for depression. It is recognized as a validated and standardized measure and provides a structured framework for assessing the severity of depressive symptoms. Subjects respond to each item based on the frequency and intensity of their experiences over a specified time frame. Cumulative scores, ranging from 0 to 27, serve as a quantitative indicator of the individual's level of depression. A higher score indicates a more noticeable presence of depressive symptoms, enabling a detailed assessment of the condition's severity.}} When ambiguity arises between the scoring of the two psychiatrists or when additional clarification is required, a tie-breaking mechanism is incorporated into our ground truth labeling process. A new psychiatrist, distinct from the initial assessor, is enlisted to provide an independent evaluation. This additional layer of scrutiny enhances the reliability of our assessments, minimizing the potential for misclassification and ensuring the precision of our ground truth labeling.%

\section{Study Design}

 %\MISC{Ma'am, I apologize for that. I had initially used red font to add that line to look into it later on. I've removed it.}%

The study design is a case-control study that was conducted between December 2023 and May 2024. The study was approved by the Bangladesh Medical Research Council. Written informed consent was obtained from all subjects.

%\SR{Why are the following sentences required?} \SR{(} Declaration of Helsinki, \JA{a statement of ethical principles for medical research involving human participants, was developed by the World Medical Association (WMA) in 1964 and is considered a cornerstone document in research ethics. It emphasizes the importance of protecting human subjects in research and ensuring their well-being}) \SR{)} \MSIC{Ma'am, I had set this to be something we needed to look at. I think we can remove the extensive explanation for the Declaration of Helsinki. \SR{Please do so. For us, BMRC approval is okay.}} and has been %

\subsection{Study Population}

The MDD subjects were recruited from the National Institute of Mental Health and Hospital (NIMH) in Dhaka, Bangladesh. Licensed psychiatrists of the institutions clinically diagnosed the subjects.  According to the PHQ-9 assessment, the control group showed no sign of a depressive disorder. 
%The controls were recruited from the University of Dhaka, Bangladesh. MDD patients and controls aged 17 to 56 were recruited.%
The exclusion criteria for all study participants were as follows: Individuals unable to remain seated for a prolonged verbal interaction were considered unfit for our data collection method, which required them to stay seated in a chair for approximately 5-10 minutes during the interaction.
In total, we recruited 103 participants for analysis. The cohort's median (range) age is 25 (17-52) years, with a standard deviation of 8.9 years. The majority of the cohort is males (54.3\%, N=105). The primary inclusion criteria are based on the PHQ-9 scale. The interpretation of the PHQ-9 score is as follows: 1-4: minimal depression, 5-9 mild depression, 10-14: moderate depression, 15-19 moderately severe depression, 20-27 severe depression. For simplicity, we rescaled the PHQ-9 scale to 3 classes, where a score less than 5: mild depression, 5-15: moderate depression, and greater than 15: severe depression. Participants are classified into three groups according to the severity of their depression: mild (PHQ-9 score $\leq$4), moderate (5$\geq$ PHQ-9 score $\leq$ 15) and severe (PHQ-9 score $\geq$15). The 103 participants can be divided into three age groups.

\begin{figure}[!t]
    \centering
    \begin{minipage}[t]{0.48\textwidth}
        \centering
        \resizebox{\linewidth}{!}{%
        \begin{tabular}{|lll|} 
        \hline
        \multicolumn{3}{|l|}{\textbf{Participant Demographic Summary}} \\ 
        \hline
        \textbf{Gender} & \textbf{Number} & \textbf{Percent (\%)} \\
        Male & 57 & 54.29 \\
        Female & 48 & 45.71 \\
        \textbf{PHQ-9 scale Diagnosis} &  &  \\
        Mild  & 28 & 27.19 \\
        Moderate  & 36 & 34.95 \\
        Severe  & 39 & 37.86 \\
        \textbf{Subject Age Group} &  &  \\
        $\leq$ 20 & 20 & 19.04 \\
        21-39 & 71 & 69.52 \\
        $\geq$ 40 & 12 & 11.43 \\
        Mean Age (years) & 28.16 &  \\
        Median Age (years)  & 25 &  \\
        Age Standard Deviation  & 8.92 &  \\
        \hline
        \end{tabular}
        }
        \captionof{table}{Demographic characteristics and PHQ-9 score distribution (N=105)}
        \label{tab:participant_demographic_data}
    \end{minipage}%
    \hfill
    \begin{minipage}[t]{0.48\textwidth}
        \centering
        \includegraphics[width=\textwidth]{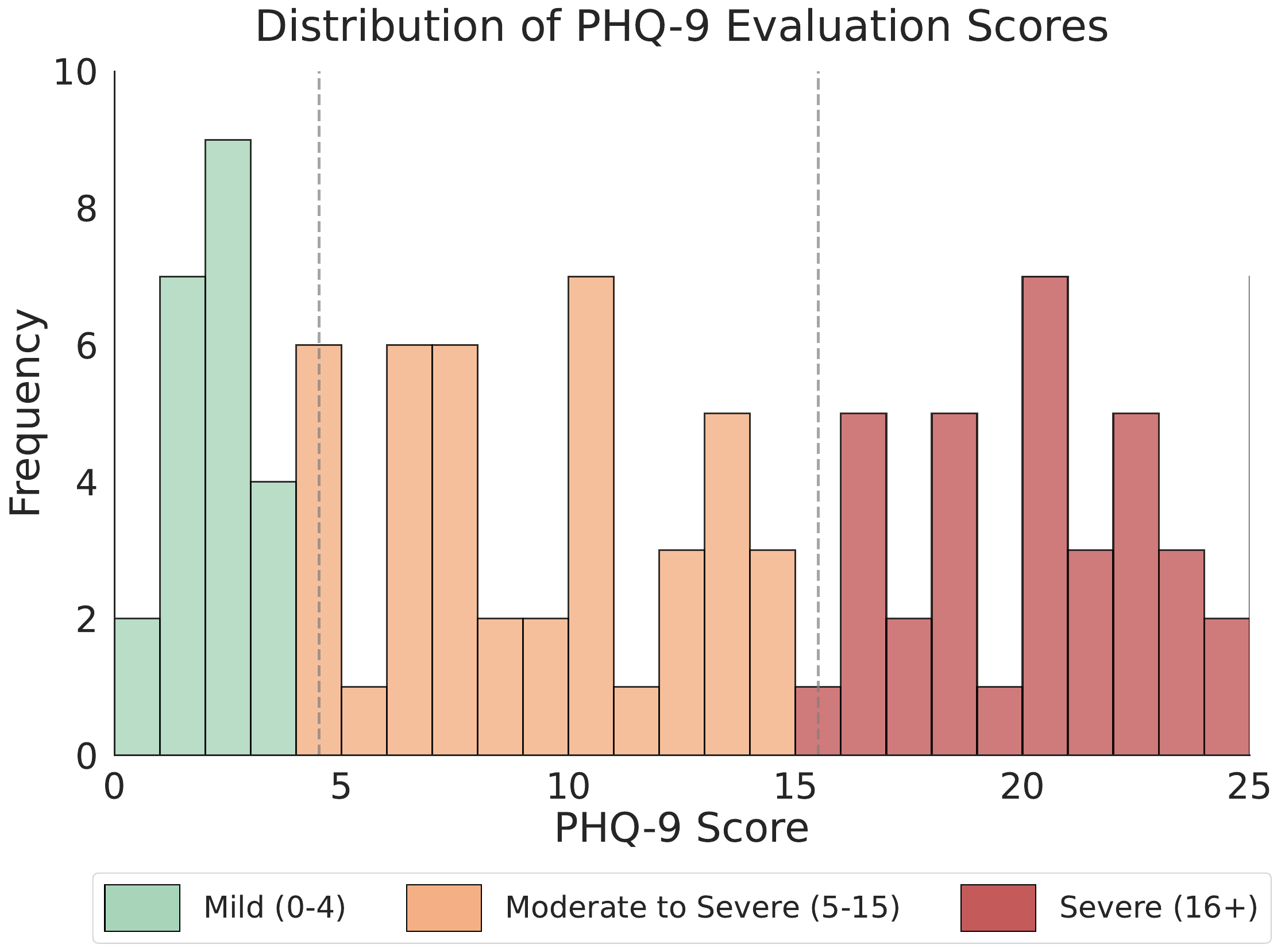}
        \caption{Histogram illustrating the distribution of Patient Health Questionnaire-9 (PHQ-9) scores among subjects}
        \label{fig:score_distribution_histogram}
    \end{minipage}
\end{figure}

The violin plot in Figure  \ref{fig:gender_vs_score_violinplot}  illustrates the distribution of PHQ-9 scores for male and female subjects, highlighting significant differences in depressive symptom prevalence and severity between genders. Males exhibit a distribution skewed toward lower scores, with a median of 6.0 %(\SR{this median does not match with the figure}) (\MSIC{Ma'am, the data was correct but the plot I generated earlier had the error. I fixed it.}),
placing most in the ``mild depression" range or the lower end of "moderate to severe depression." The violin shape for males shows a wide base at lower scores, indicating a concentration of mild cases and fewer individuals with higher scores. In contrast, females have a median score of 17.0 %(\SR{this median does not match with the figure}) (\MSIC{Ma'am, the data was correct but the plot I generated earlier had the error. I fixed it.}), 
with most falling into the ``severe depression" category. Their violin shape is more symmetrical and fuller in the upper ranges, suggesting a more even distribution of moderate to severe symptoms. This indicates a higher prevalence and severity of depression among females in the study. Additionally, the consistent width of the female violin across score ranges points to greater variability in symptom severity, further emphasizing the disparity between genders. Overall, the plot reveals that females experience higher and more varied levels of depressive symptoms compared to males.

\subsection{Data Collection Setup}

\begin{figure}[!tbp]
    \centering   \includegraphics[width=\linewidth]{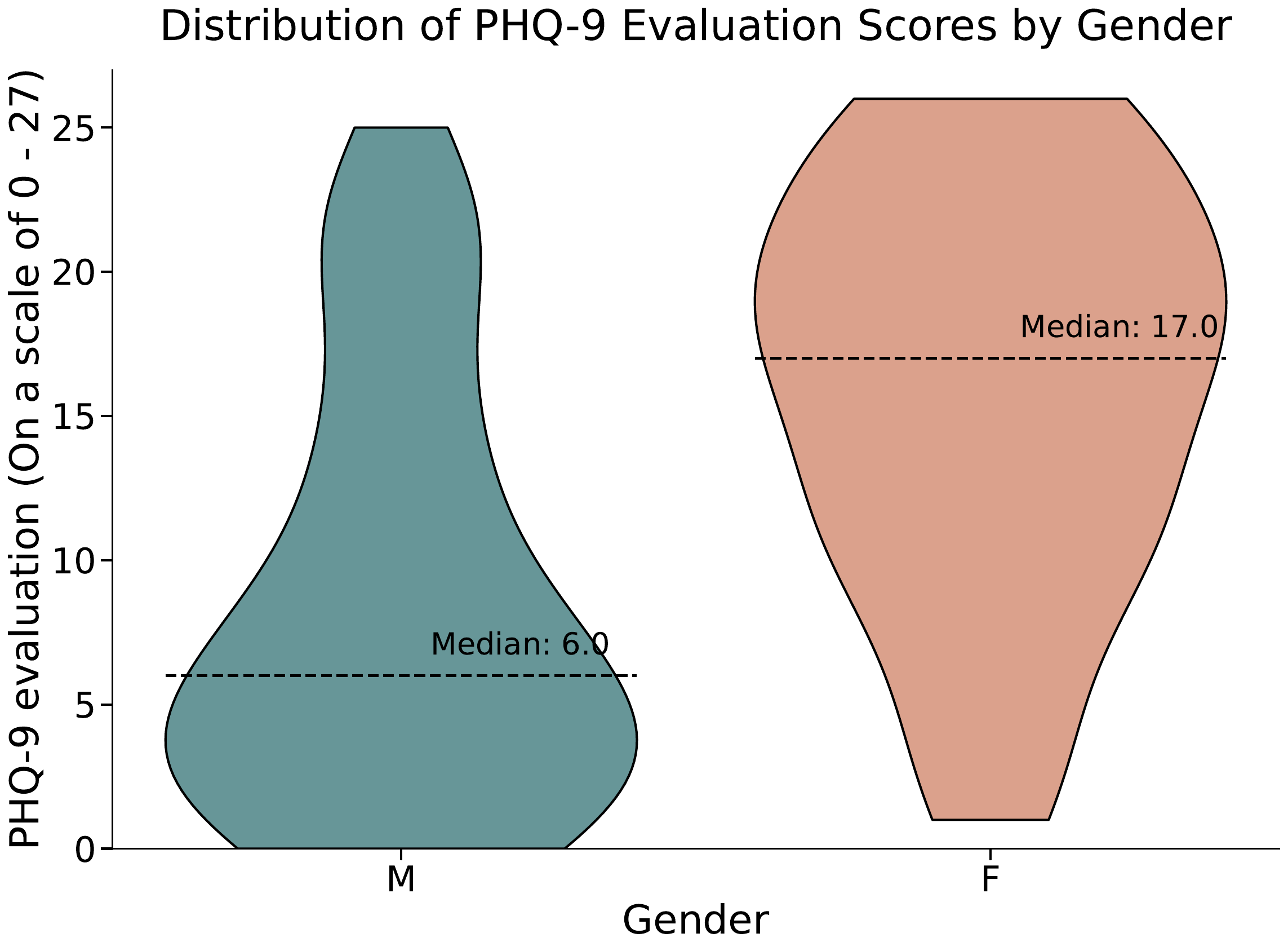}
    \caption{Violinplot illustrating the distribution of Patient Health Questionnaire-9 (PHQ-9) scores between male and female subjects.}
\label{fig:gender_vs_score_violinplot}
\end{figure}

The tripartite nature of the data requires that the three modalities be collected on-site within a reasonably short span of time, to ensure data alignment and quality \cite{Wei2020CMHAD}. Figure \ref{fig:Tripartite_Data_Collection_Illustration} illustrates a high-level workflow of the data collection from both our collection sites: The National Institute of Mental Health (NIMH) and the University of Dhaka (DU).

\begin{figure}[!h]
    \centering
  \includegraphics[width=1.03\linewidth]{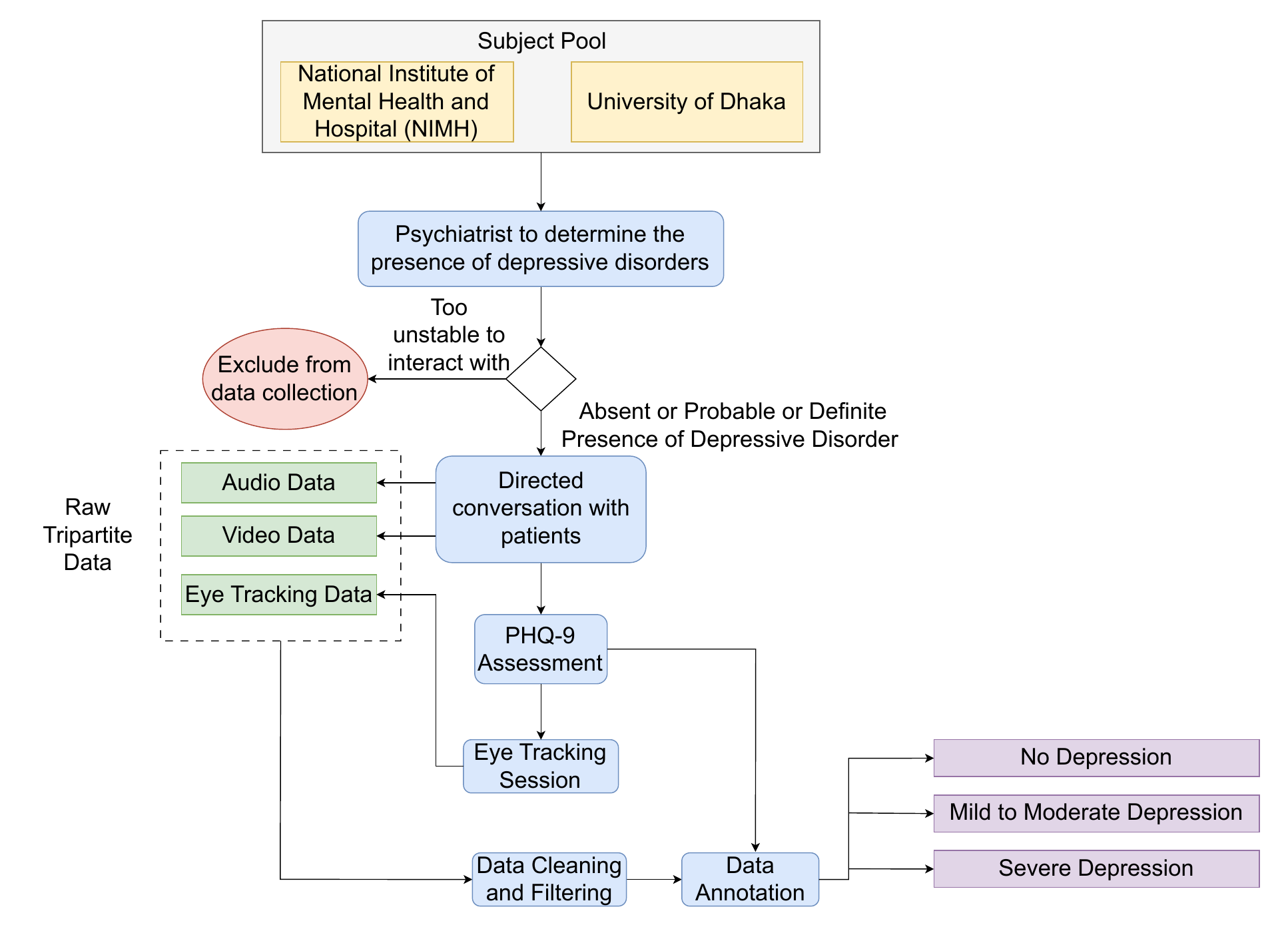}
    \caption{ Illustration of the overall workflow for the data collection conducted at the two collection sites: The National Institute of Mental Health (NIMH) and the University of Dhaka (DU). The data are collected, cleaned for anomalies and missing data, annotated, and separated into the three classes.}
  \label{fig:Tripartite_Data_Collection_Illustration}
\end{figure}

After the initial screening by a licensed psychiatrist, all participants who were deemed composed enough to sit through the data collection session were moved to the next session. With each of the selected participants, we held practice trials before the main study to ensure the participants were familiar with the experimental procedures and that the data were collected correctly.

%with a white circle on a black background appearing on the screen. The white circle stays fixed for 3 seconds before moving in a straight line to another point on the screen. The subject is asked to fix their gaze on the white circle and follow it as it moves around on the screen. After this segment, small plus (+) marks appear on the screen with much larger hollow circles around them, the centre of the circles being the (+) symbols. The subject is then prompted to look at the (+) symbols in a random order or per the data collectors' instructions. The fixation points of the subjects are illustrated as small green circles. The data collector then checks whether the green circles appear inside the hollow circles when the subjects look at the (+) mark inside them. If they do, then the calibration process is deemed a success. If not, the process is repeated until the data collectors deem it satisfactory.% 

Existing studies show that those suffering from MDD are more likely to focus on negative emotions than positive emotions and have a greater negative attention bias for negative stimuli \cite{duque2015double}. Building on this work, we developed a set of 48 images with facial expressions: 12 happy faces, 12 sad faces, and 24 neutral faces. These images were selected from the OASIS image library. All images were processed with Adobe Photoshop software with consistent size, gradation, and resolution (i.e., image size 354 $\times$ 472 pixels; 8 $\times$ 6 cm). The stimulus consisted of pairs of images that included an emotional and a neutral facial expression. Three types of images were shown to the subjects: images that had a neutral face alongside a happy face, images with a neutral face and a sad face, and images with a sad face and a happy face. The order of the images is randomized, but the order in which the images were shown was kept the same during the eye tracking session; this helped to retain a degree of homogeneity in the expected eye movement of participants. Figure \ref{fig:Facial_Expression_Selection} shows how the images were formed, aligning images of different emotions adjacently. 
Figure \ref{fig:data_collection_procedure} shows the setup of the eye tracker with the monitor used for each of the participants.
The positions of the two images within each of the ``combined" images were counterbalanced. Each image with visual stimuli was shown for 3 seconds, followed by an entirely black image (all pixel values set at 0) for 2 seconds. This was done to allow the gaze of the subjects to `reset' such that they could view the succeeding images with visual stimuli without carrying any form of bias from the preceding image. The duration of the task was 100 seconds. Eye-tracking information was collected using a GazePoint GP3 HD Eye Tracker \footnote{https://www.gazept.com} with a sampling frequency of 150 Hz. The experimental stimuli were shown on a 15.6-inch Light Emitting Diode (LED) screen laptop.
Each trial began with a brief calibration session specified by the eye tracker manufacturer. Afterward, the images were shown as described for the durations above while the subject viewed them, and the eye tracking camera collected the data. Inside the eye tracker software, all the measurements were made in milliseconds.A webcam is affixed atop the monitor to record the facial expressions of participants during eye-tracking sessions, as seen in Figure \ref{fig:data_collection_procedure}. Participants are videotaped upon consenting to data collection and positioning before the display. While administering the PHQ-9, participants respond to the questionnaire,  exhibiting an array of facial expressions. These are captured by the webcam and encoded into video data.

\begin{figure}[!h]
    \centering
  \includegraphics[width=0.8\linewidth]{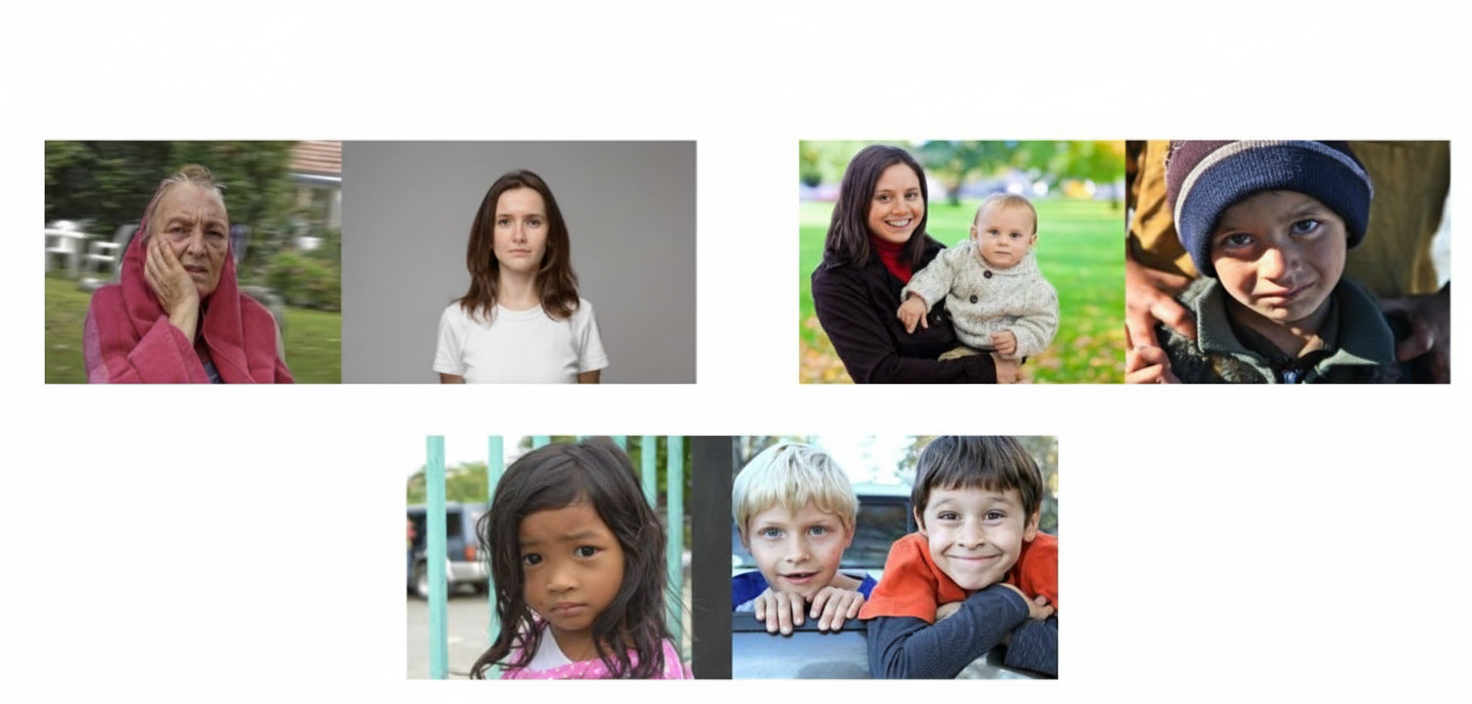}
    \caption{The three combinations of images shown to the subjects: (top-left) Sad-Neutral, (top-right) Neutral-Happy, (bottom) and Happy-Sad.}
  \label{fig:Facial_Expression_Selection}
\end{figure}

%The positions of the two images within each of the ``combined" image \SR {(}task type \SR {???)} \MSIC{Ma'am, we altered this}were counterbalanced. Each image with visual stimuli was shown for 3 seconds, followed by an entirely black image (all pixel values set at 0) for 2 seconds. This was done to allow the gaze of the subjects to `reset' such that they could view the succeeding images with visual stimuli without carrying any form of bias from the preceding image. The duration of the task was 100 seconds. Eye-tracking information was collected using a GazePoint GP3 HD Eye Tracker \footnote{https://www.gazept.com} \SR{Exclude (Gazepoint, Vancouver, British Columbia, Canada). Refer to their website in the footnote.} \MSIC{I added their website in the footnote of this page, ma'am.} with a sampling frequency of 150 Hz. The experimental stimuli were shown on a 15.6-inch Light Emitting Diode (LED) screen laptop.%

Figure \ref{fig:data_collection_procedure} illustrates the overall experimental setup.

\begin{figure}[!h]
    \centering
  \includegraphics[width=0.85\linewidth,trim={0.01cm 0.1cm 0.1cm 0cm},clip]{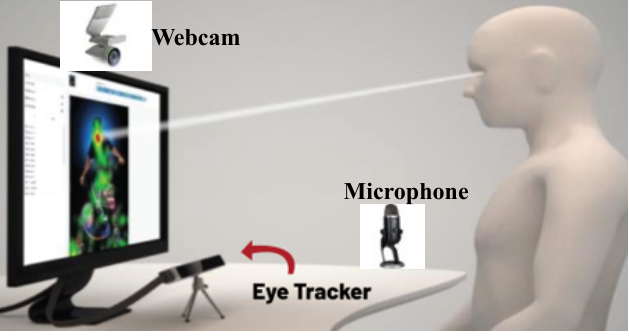}
  %\vspace{1.4em}
    \caption{Illustration of the overall experimental setup. As the PHQ-9 questionnaire assessment is administered by the data collection team, the webcam mounted on the monitor records the facial expressions of the subject, while the microphone records the audio from the verbal interaction. After the verbal assessment, the eye tracker, placed below the monitor, is used to track the subject's gaze patterns as they view the selected images.}
  \label{fig:data_collection_procedure}
\end{figure}

\begin{figure}[!h]
    \centering
  \includegraphics[width=\linewidth, trim={0.05cm 0.05cm 0.05cm 0.05cm}, clip]{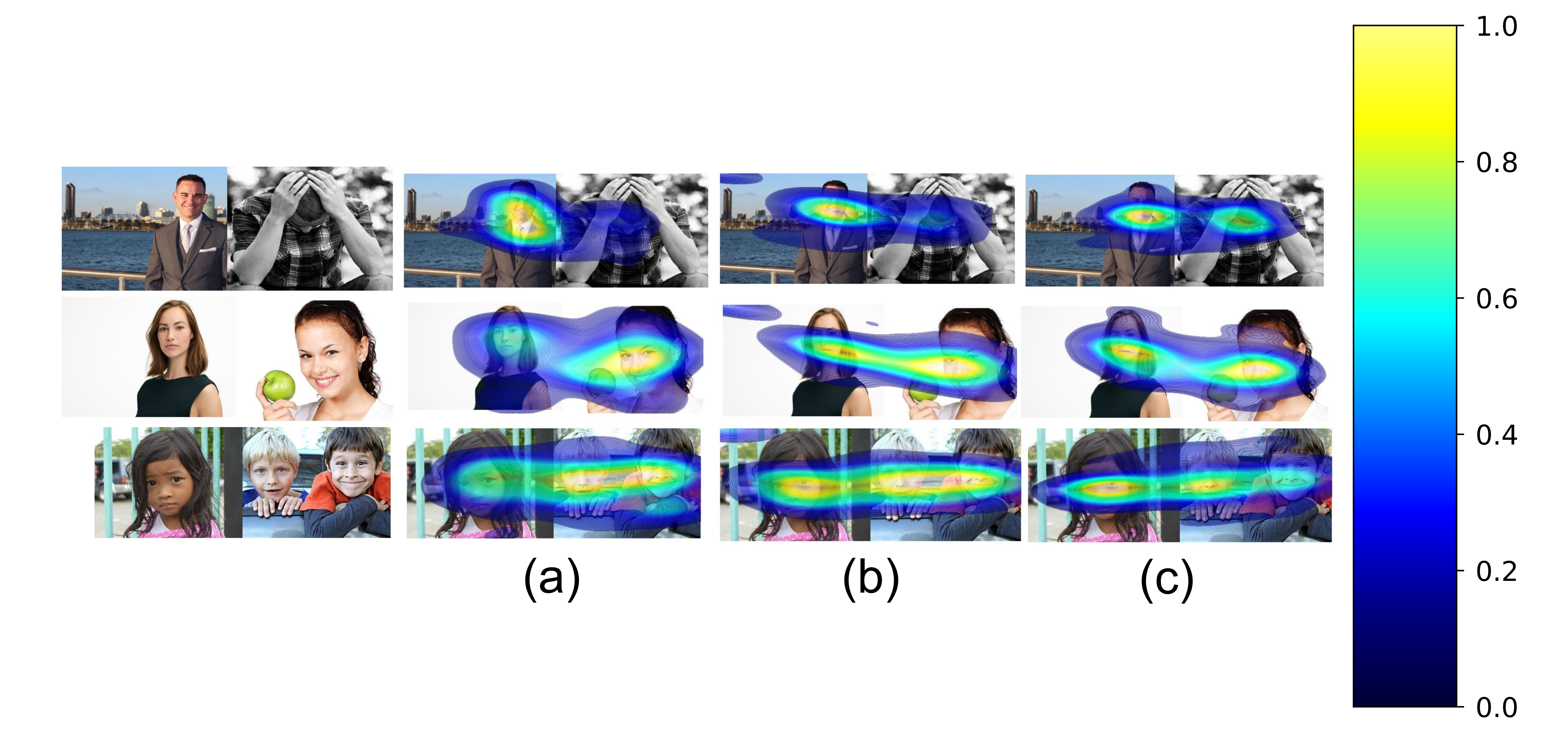}
  %\vspace{-1.4em}
    \caption{
    % The saliency heatmaps generated from the eye tracking session data from the subjects. The three groups: No Depression, Mild to Moderate Depression and Severe Depression are seen having visibly different saliency maps
    % 
   Saliency heatmaps generated from eye tracking session data showing attention patterns across depression severity levels. (a) No Depression group, (b) Mild to Moderate Depression group, and (c) Severe Depression group. The heatmaps reveal distinct visual attention patterns, with colour intensity indicating fixation density (yellow = highest attention, blue = lowest attention)."}
    \label{fig:Saliency_Heatmaps}
\end{figure}

% \begin{figure*}[!t]
%     \centering
%     \includegraphics[width=0.9\textwidth]{}
%     \caption{The saliency heatmaps generated from the eye tracking session data from the subjects. The three groups: No Depression, Mild to Moderate Depression and Severe Depression are seen having visibly different saliency maps}
%     \label{fig:Saliency_Heatmaps}
% \end{figure*}

Following the eye-tracking step, participants engaged in a directed yet casual conversation with our data collectors. By defining a set of generalized questions, the data collectors attempted to converse with the participants and to delve deeper into the answers that they provided without forcing a conversation. During this time,we recorded their verbal responses, while they conversed, with an audio recorder. This approach facilitates a more comprehensive dataset from which many vocal features can be extracted, enhancing the subsequent analysis by the deep learning model. The extended interaction allows the algorithm to discern a broader range of acoustic patterns, thereby improving the accuracy and reliability of emotion recognition.

%Upon obtaining informed consent from participants, audio recordings of the interactions between the subjects and researchers are captured via an audio recording device. These recordings are conducted during the administration of the PHQ-9 assessment. Researchers engage in extended dialogues with participants to collect a sufficient duration of audio samples. This approach facilitates a more comprehensive dataset from which many vocal features can be extracted, enhancing the subsequent analysis by the deep learning model. The extended interaction allows the algorithm to discern a broader range of acoustic patterns, thereby improving the accuracy and reliability of emotion recognition.%

The video recording captures facial expressions, body language, and non-verbal cues, while the audio recording serves to capture the loudness, pitch, and other subtle variations in their voices. Following the conversation, the participants completed the PHQ-9 questionnaire in the presence of a licensed psychiatrist.

\section{Proposed Methodology}
\subsection{Preliminaries and Motivation}
In depression classification, several methods has been proposed based on a single modality (unimodal) features, while ignoring the other modalities (e.g., visual maps, facial expression, etc.). Since, multimodal data can provide supplimentary information about the task, these unimodal methods lack the necessary expressive power. Hence, in this work, we focused on fully harnessing the power of multimodal data for depression detection. Majority of existing multimodal approaches consist of two two-stage process. The first stage encompasses feature extraction, where we extract modality-specific features (unimodal features) from our dataset using existing feature extractor modules \cite{rahman2008cognitive,min2023detecting}. For our dataset, the extracted unimodal features are (see figure \ref{fig:proposed_approach}\textit{(Left)}) visual Saliency, facial expression, and acoustic features. These features are initially learned through independent unimodal frameworks that capture the modality-specific features. For the unimodal module, we followed a visual saliency-based framework similar to Rahman \textit{et. al.} \cite{rahman2021classifying} that extracts the gaze pattern related features. The visual and acoustic features are extracted in accordance with Min \textit{et. al.}\cite{min2023detecting}. After extracting these modality specific features there are some works \cite{min2023detecting} that concatenated these features and applied traditional machine learning algorithms, such as decision trees, random forests, SVM on top of this. The formulation of the method can be sumarized as:
\begin{align}
    \hat{Y} = f_{Uni}\bigg(\mathbf{U}_{Audio} \oplus \mathbf{U}_{Gaze} \oplus \mathbf{U}_{Video}\bigg)  \label{eqn:ensemble}
\end{align}
where, $\hat{Y}$ is the predicted outcome, $\mathbf{U}_i$ is the extracted unimodal features for the modality $i$, $\oplus$ represents the concatenation operation and $f_{Uni}$ is the traditional machine learning algorithms (e.g., SVM, KNN etc).

Concretely, in the cross-modality learning, they consider each modality as a node, $v_i \in \mathcal{V}$, where $\mathcal{V}$ is the set of nodes (i.e., in our case $||\mathcal{V}||=3$) equipped with a feature vector, $h(v_i) \in \mathbb{R}^d$ ($d$ is the dimension of the feature), that are learned through a respective unimodal architecture. Since all these modalities are interrelated in a complementary aspect, consequently, they construct a complete graph or dense graph (i.e., all modalities are connected with each other through edges), $\mathcal{G}(\mathcal{V},\mathcal{E})$, where, $\mathcal{E}$ is the edge set. The formulation can be expressed as:
\begin{align}
    \hat{Y} = f_{Cross}\bigg(\mathcal{G}(\mathcal{V}, \mathcal{E}), \mathbf{U}_{Audio}, \mathbf{U}_{Gaze}, \mathbf{U}_{Video} \bigg)  \label{eqn:cross}
\end{align}
where, $f_{Cross}$ can be GCN, Transformer based architectures or any other methods can levearages the cross modality interaction.
Inline with these methods shown in \ref{eqn:cross}, we also utilize the graph structure of underlying multimodal data. Although these methods exploits the interaction between different modalities by aggregating information from neighboring nodes, however, these architectures has a significant limitation: it predominantly relies on low-frequency information, potentially overlooking the nuanced high-frequency signals that can carry critical context for capturing the depression level (See. Lemma \ref{prop:geem_low_pass}). This focus on low-frequency content may lead to suboptimal representation, as high-frequency signals can be indicative of unique or anomalous characteristics that can influence the task. Give an example. To address this limitation, we included a novel graph convolutional module namely ``\textbf{M}ulti-\textbf{F}requency \textbf{F}ilter-\textbf{B}ank \textbf{M}odule" (\textbf{MFFBM}) that can easily be integrated with the existing graph convolutional methods \cite{kipf2017semi}   or existing Transformer architectures \cite{chen_2024_iifdd}\cite{rajan_2022_cross_self}, without requiring any extra learnable parameters. Moreover, this \textbf{MFFBM} module does not include any extra computational burden such as eigendecomposition, higher order power of adjacency matrix. Furthermore, our \textbf{MFFBM} module significantly enhances traditional graph-based frameworks by leveraging both low- and high-frequency information to create a more comprehensive representation for depression classification.

\subsection{Overall Architecture}

Our proposed depression detection framework is composed of two stages: (1) \textit{Unimodal Feature Extraction}, and (2) \textit{Multimodal Feature Extraction via Graph Convolutional Network (GCN)}. As illustrated in Figure \ref{fig:proposed_approach}, the pipeline begins with processing raw input data from three modalities: audio, video, and visual saliency. Each modality is passed through its corresponding feature extractor module that transforms the raw signals into fixed-length embedding vectors of dimension $(n, 64)$, where $n$ is the number of subjects. These embeddings encode local temporal or spatial patterns through convolution, pooling, and Bi-LSTM layers, followed by fully connected layers.In the second stage, the learned unimodal embeddings are used to construct a fully connected graph, where each modality acts as a node. We model inter-modal relationships using a novel multi-frequency GCN block that combines low-pass and high-pass graph filters, designed to capture both commonalities and distinctions across modalities. The resulting cross-modal features are aggregated via global average pooling and concatenated with the original unimodal features to form a comprehensive representation. Finally, this fused representation is passed through a classification head to categorize subjects into three depression severity levels.

% Finally, this fused representation is passed through a classification head to predict a continuous depression score and categorize subjects into three depression severity levels.

\subsection{Unimodal Feature Extractor}

\textit{1)Audio Module:} The audio module uses the Librosa Python library to extract features from audio files. Librosa specialises in music and audio analysis. We extract three features using our audio module - chroma feature, mel spectrograms and Mel-frequency cepstral coefficients(MFCCs). Chroma features can effectively capture pitch variations, intonation patterns, and timbre from different audio files. A Mel spectrogram is a distribution of frequencies in an audio signal in the Mel scale. This is useful for speech processing as it mimics human perception of different frequencies. MFCCs represent the short-term power spectrum of sound, which helps neural networks or machine learning models process human speech effectively. We utilised chroma feature, Mel spectrograms, and MFCCs as input features, as these representations offer a mathematically robust and perceptually meaningful basis for deep learning on speech signals. Each feature aggregates and encodes acoustic information relevant to attributes such as loudness, F0, HNR, jitter, shimmer, F2, Hammaberg Index, and spectral flux, though not always via one-to-one mapping.

\textit{2)Video Module:} Our video module incorporates Face Emotion Recognizer (FER) which is a open source python library to analyze a video frame by frame, detect emotions and categorizing them into seven categories - 'angry', 'disgust','fear','happy', 'sad' and 'neutral' on a scale of 0 to 1. We used FER's default OpenCV Haarcascade classifier to classify the emotions in each video file.

\textit{3)Visual Saliency Module:} For our visual saliency module, we created fixation maps from eye tracking data. Also we used five saliency models to generate saliency maps for the images we showed to the persons while collecting our data. The five saliency models were SalFBNet, MSI-Net, TranSalNet, Saliency Attentive Model (SAM-ResNet) and Saliency Attentive Model (SAM-VGG). We compared the fixation maps with five different saliency maps for each person to get 8 metrics from this. The metrics were AUC Borji, AUC Judd, CC, KLDiv, NSS, Similarity, AUC Shuffled, and Info Gain \cite{bylinskii2018different}.

For each modality \textit{i}, the embeddings have a shape of (n, $m_i$, $f_i$) where n represents the number of subjects, $m_i$ represents the maximum number of samples for modality \textit{i}, $f_i$ represents the feature size of modality \textit{i}. To transform the variable length outputs from each module to a uniform size, these embeddings were processed through a convolution layer to extract local sequential patterns. It was followed by a max pooling layer to get the most salient information. Then, two Bi-LSTM layers were used to capture bidirectional dependencies and two dense layers to get feature vectors of size (n,64) for each modality. These unimodal features are later used in equation \ref{eqn:final} which we discuss in \ref{subsec:cross_modality_learning}

To prepare the unimodal features for our graph based multimodal feature extractor where each modality will be considered as a node, we reshaped the feature vectors to get multi dimensional representation of the unimodal features. This enabled us to perform channel wise concatenation which preserved the characteristics of each modality. It will help us to learn cross modal information.

\subsection{Multimodal Feature Extractor}

\label{subsec:cross_modality_learning}
\begin{figure*}[!t]
    \centering
    \includegraphics[width=.9\linewidth, trim={0.2cm 0.1cm 0.1cm 0.1cm}, clip]{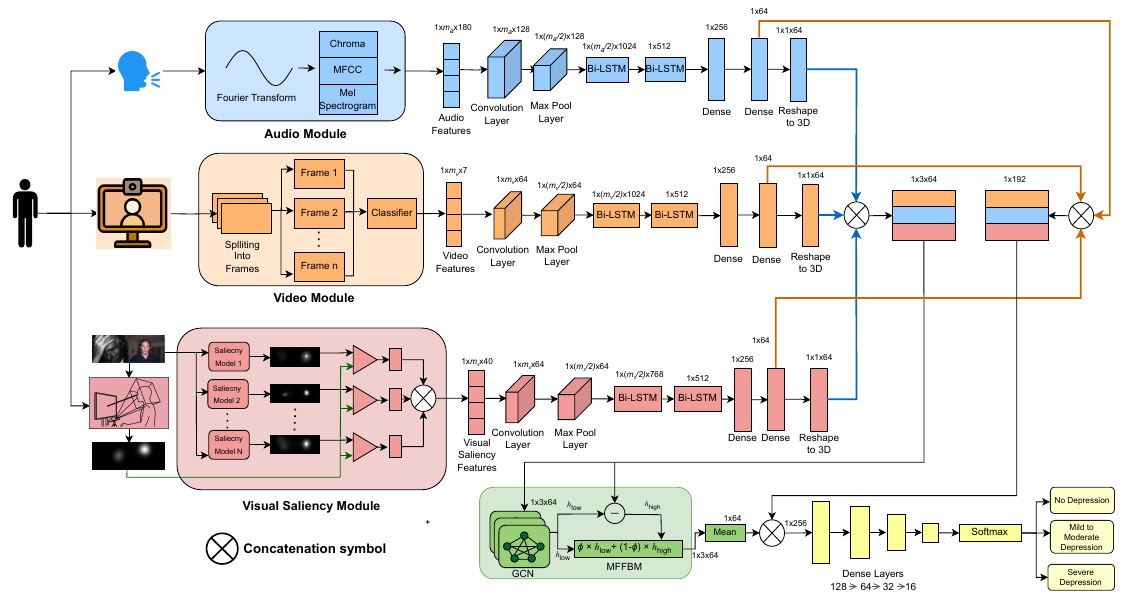}
    \caption{Our proposed multimodal depression detection framework. It starts with the unimodal feature extraction stage. Three parallel branches process audio(highlighted in \textit{blue}), video (\textit{orange}) and visual saliency (\textit{red}) data. Each branch utilizes convolutional and Bi-LSTM layers to produce a 64 64-dimensional embedding vector. In the multimodal feature extraction stage (\textit{green}), a Graph Convolutional Network(GCN) with our Multi frequency filter bank module (MFFBM) models the intermodal relationships. Finally, the resulting cross-modal features are combined with the initial unimodal features and passed to the classification head (\textit{yellow}) to predict the depressed class} 
    \label{fig:proposed_approach}
\end{figure*}

%The audio, video, and gaze modalities are not independent of each other as previous studies \cite{min2023detecting,andersson2021predicting,gao2023abnormal} treated them. These modalities are highly correlated through cross-modality interaction and their features can complement each other. 
Unlike previous studies \cite{min2023detecting,andersson2021predicting,gao2023abnormal} that treated each audio, video, and gaze modalities as separate entities, these modalities are intrinsically linked through cross-modal relationships, with features that mutually enhance and complement one another. In this work, we exploit this interdependence with Graph Convolutional Neural Network (GNN) as our cross modality learning module. To construct the graph, we assume each modality as a node. Since we are dealing with three distinct modalities, we have 3 separate nodes and each of these nodes are connected with the rest of the nodes through edges (i.e., a dense graph). We denote this graph as $\mathcal{G}(\mathcal{V}, \mathcal{E})$, where $\mathcal{V}$ is the vertex set ($|\mathcal{V}|$ = 3) and $\mathcal{E}$ is the edge set. We assume undirected edges. In GCN models, the convolution operation on the graph is defined as the multiplication of filters and signals in the Fourier domain. Specifically, GCN model learns new node representations by calculating the weighted sum of feature vectors of central nodes and the neighboring nodes. Mathematically, the simplest spectral GCN layer (Kipf and Welling, 2016) can be formulated as:
% \begin{align*}
%     \mathbf{H}^{\ell+1} = f(\mathbf{H}^{\ell}, \mathbf{A}) = \sigma\bigg(\mathbf{\tilde{A}} \mathbf{H}^{\ell} \mathbf{W}^{\ell}\bigg)
% \end{align*}
\begin{align*}
    \mathbf{h}^{\ell+1} = f(\mathbf{h}^{\ell}, \mathcal{A}) = \sigma\bigg(\mathbf{\tilde{\mathcal{A}}} {h}^{\ell} \Theta^{\ell}\bigg)
\end{align*}
where $\mathbf{h}^{\ell}$ is the matrix of activations in the $\ell$-th layer, and $\Theta^{\ell}$ is a layer-specific trainable weight matrix. In addition, $\mathcal{\tilde{A}} = \mathcal{D}^{-\frac{1}{2}} \mathcal{A} \mathcal{D}^{-\frac{1}{2}}$ is the normalized adjacency matrix with self loops, and $\sigma(\cdot)$ is an activation function, such as the $\text{ReLU}(\cdot) = \text{max}(0, \cdot)$. In addition, $\mathcal{D}$ is the diagonal degree matrix, with the $i$-th diagonal element defined as $d_i = \sum_{i \neq j} \mathcal{A}_{ij}$. In the GCN layer in our proposed framework, instead of using a single filter $\Theta$ to extract features from the input feature vector $\mathcal{X}$, we stack multiple of these graph convolutional filters with the adjacency matrix $\mathcal{\tilde{A}}$. This  is computed as:
% \begin{align}
%     \mathbf{H}^{\ell+1}_{low} = f(\mathbf{A}, \mathbf{X}) = \sum_{i}^k \beta_i\text{ReLU}\bigg((\mathbf{\tilde{A}} \odot \mathbf{M}^{\ell}) \mathbf{H}^{\ell} \mathbf{W}^{\ell}_i\bigg)  \label{eqn:low_pass_org}
% \end{align}
\begin{align}
    \mathbf{h}^{\ell+1}_{low} = f(\mathcal{A}, \mathcal{X}) = \sum_{i}^k \phi_i\text{ReLU}\bigg((\mathcal{\tilde{A}} \odot \mathcal{M}^{\ell}) {h}^{\ell} \Theta^{\ell}_i\bigg)  \label{eqn:low_pass}
\end{align}
% where, $\sum_i \beta_i = 1$ and each $\beta_i$ can be a learnable parameter or for simplicity we may manually set it [0,1] and treat it as a hyperparameter. In this work we choice the second option and treat it as a hyperparameter. We will explore the possibility of learning them in our future study.
% It is worth noting that Eqn \ref{eqn:low_pass} primarily act as a low pass operation is and designed to be a function of $\mathbf{\tilde{A}}$, $\mathbf{C}^{(\ell+1)}_{low} =  \mathbf{D}^{-\frac{1}{2}} (\mathbf{A} + \mathbf{I}_{\mathbb{N}}) \mathbf{D}^{-\frac{1}{2}}$, where $\mathbf{D} \in \mathbb{R}^{N \times N}$ is the diagonal degree matrix, $\mathbf{A}$ is the graph adjacency matrix without self-loop, $\mathbf{I}_{\mathbb{N}}$ is the identity matrix and $\mathbf{M}^{(\ell)}$ is the attentin mask at $\ell$-th layer. Intuitively, the eqn \ref{eqn:low_gcn} represents a low-pass linear filter by aggregating the node’s self-information with its neighborhood information. Since different frequency components contributes differently in the signal that efffect the overall depression score. Thus, it is reasonable to construct GNNs beyond only low pass filters, that can extract various spectrum of information through multi-channel filter banks. For this purpose, we designed a simple yet effective high-pass GNN as:
where, $\sum_i \phi_i = 1$ and each $\phi_i$ can be a learnable parameter or for simplicity we may manually set it [0,1] and treat it as a hyperparameter. In this work we choice the second option and treat it as a hyperparameter. We will explore the possibility of learning them in our future study.
It is worth noting that Eqn \ref{eqn:low_pass} primarily act as a low pass operation is and designed to be a function of $\mathcal{\tilde{A}}$, $\mathcal{K}^{(\ell+1)}_{low} =  \mathcal{D}^{-\frac{1}{2}} (\mathcal{A} + \mathcal{I}_{\mathbb{N}}) \mathcal{D}^{-\frac{1}{2}}$, where $\mathcal{D} \in \mathbb{R}^{N \times N}$ is the diagonal degree matrix, $\mathcal{A}$ is the graph adjacency matrix without self-loop, $\mathcal{I}_{\mathbb{N}}$ is the identity matrix and $\mathcal{M}^{(\ell)}$ is the attentin mask at $\ell$-th layer. Intuitively, the eqn \ref{eqn:low_pass} represents a low-pass linear filter by aggregating the node’s self-information with its neighborhood information. Since different frequency components contributes differently in the signal that efffect the overall depression score. Thus, it is reasonable to construct GNNs beyond only low-pass filters, that can extract various spectrum of information through multi-channel filter banks. For this purpose, we designed a simple yet effective high-pass GNN as:
% \begin{align}
%  \mathbf{H}^{(\ell+1)}_{high} =  \mathbf{a} (\mathbf{\tilde{A} } \odot \mathbf{M}^{(\ell)}) \mathbf{H}^{\ell} \mathbf{W}^{\ell} - (1-\mathbf{a}) \mathbf{H}^{\ell} \mathbf{W}^{\ell} \label{eqn:high_gcn_org}   
% \end{align}
\begin{align}
 \mathbf{h}^{(\ell+1)}_{high} =  \mathbf{a} (\mathcal{\tilde{A} } \odot \mathcal{M}^{(\ell)}) \mathbf{h}^{\ell} \Theta^{\ell} - (1-\mathbf{a}) \mathbf{h}^{\ell} \Theta^{\ell} \label{eqn:high_gcn}   
\end{align}
% where, we designed the high pass filter as, $\mathbf{C}^{(\ell+1)}_{high} = -\mathbf{a}\mathbf{\tilde{A}} + (1-\mathbf{a})\mathbf{I}_{\mathbb{N}}$. From eqn \ref{eqn:high_gcn}, our high-pass GNN computes the difference between the self-information and neighborhood information. It highlights the features of a node that are distinct from its neighbors. Finally, to construct the multi-frequency filter banks, we combine these low and high frequency information, $\mathbf{H}_{low}^{(\ell+1)}$ and $\mathbf{H}_{high}^{(\ell+1)}$, resulting in the following equation for our proposed GCN block:
where, we designed the high pass filter as, $\mathcal{K}^{(\ell+1)}_{high} = -\mathbf{a}\mathcal{\tilde{A}} + (1-\mathbf{a})\mathcal{I}_{\mathbb{N}}$. From eqn \ref{eqn:high_gcn}, our high-pass GNN computes the difference between the self-information and neighborhood information. It highlights the features of a node that are distinct from its neighbors. Finally, to construct the multi-frequency filter banks, we combine these low and high frequency information, $\mathbf{h}_{low}^{(\ell+1)}$ and $\mathbf{h}_{high}^{(\ell+1)}$, resulting in the following equation for our proposed GCN block:
% \begin{align}
%     \mathbf{H}^{(\ell+1)} = \beta \mathbf{H}_{low}^{(\ell+1)} + (1-\beta) \mathbf{H}_{high}^{(\ell+1)} \label{eqn:overall_eqn}
% \end{align}
\begin{align}
    \mathbf{h}^{(\ell+1)} = \phi \mathbf{h}_{low}^{(\ell+1)} + (1-\phi) \mathbf{h}_{high}^{(\ell+1)} \label{eqn:overall_eqn}
\end{align}
% where, $K$ is the total number of graph filters, $\gamma_{ij}$ is the importance factor and $\zeta_j$ 
% is the balancing factor since it controls the frequency spectrum of the overall filter. Both of the $\gamma_{ij}$ and $\zeta_j$ set as a hyperparameter and  $\mathbf{H}^{(\ell+1)}$ is the overall feature representation with multi-channel filterbank at $\ell+1$ layer.
where, $\phi_j$ 
is the balancing factor since it controls the frequency spectrum of the overall filter. $\phi_j$ set as a hyperparameter and  $\mathbf{h}^{(\ell+1)}$ is the overall feature representation with multi-channel filterbank at $\ell+1$ layer. By stacking multiple GCN blocks we can learn higher-order node features from neighboring nodes. In addition, GCN propagates information on a graph structure and gradually aggregates the information of neighboring nodes, which allows us to effectively capture the complex dependencies in the graph structure. Finally, we employ a simple channelwise global average pooling (GA) to generate graphlevel feature representations and concatenate this with modality specific features (i.e., extracted through unimodal feature extractor). This can be expresed as:

% \begin{align}
%     \mathbf{Z} = \mathbf{H}^{(L)} \oplus \mathbf{U}_{A} \oplus \mathbf{U}_{V} \oplus \mathbf{U}_{G} \label{eqn:final} 
% \end{align}
\begin{align}
    \mathcal{Z} = \mathbf{h}^{(L+1)} \oplus \mathcal{U}_{A} \oplus \mathcal{U}_{V} \oplus \mathcal{U}_{G} \label{eqn:final} 
\end{align}
% where, $\wp$ denotes the channel wise average pooling operation, $\mathcal{C}$ is the total number of channel, $\mathbf{U}_{j}$ is the unimodal features where $j$ denotes the modality and $\oplus$ is the concatenation operation done chanelwise.
Finally, to enhance the final representation we integrate the cross modality based features, $\mathbf{h}^{(L+1)}$, with the unimodal features, $\mathcal{U}_{A}, \mathcal{U}_{V}, \mathcal{U}_{G}$, through a channel-wise concatenation operation, as shown in equation \ref{eqn:final}.
% \SH{Here, we concatenated the unimodal features with the output we got from multi-channel filterbank.} To be specific, we extract both cross modality features and modality specific features to generate the final feature representations for the depression detection task.

%\subsection{Predicting Depression Class}
% The final representation of our multimodal feature extractor from equation \ref{eqn:final} was passed through multiple dense layers followed by a softmax layer to obtain the depression score. According to the score, we classified subjects into three classes,  no depression, mild to moderate depression and severe depression.
The final representation of our multimodal feature extractor from equation \ref{eqn:final} was passed through multiple dense layers followed by a softmax layer to classify subjects into one of three depression levels (e.g., No Depression, Mild to moderate depression, Severe depression).

\subsection{Theoretical Analysis of MFFB}
% \SH{I didn't change anything in Lemmas 1 and 2. Citation needs to be added in Lemma 1}
\begin{lemma}
The spectral behavior of a graph convolution kernel $\mathbf{C}$ can be described by its frequency response in the spectral domain, given by:
\begin{align*}
    \mathcal{F}(\lambda) = \operatorname{diag}^{-1}\left(\mathbf{U} \mathbf{C} \mathbf{U}^T\right),
\end{align*}
where $\mathcal{F}(\lambda)$ represents the frequency response, $\mathbf{U}$ is the matrix of eigenvectors of the normalized graph Laplacian $\mathcal{L}$, and $\mathbf{C}$ is the graph convolution operator.
\end{lemma}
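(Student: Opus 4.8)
The plan is to derive the frequency response directly from the orthonormal eigendecomposition of the normalized Laplacian together with the defining property of a graph convolution, namely that it acts as pointwise scaling in the graph Fourier basis. First I would write $\mathcal{L} = \mathbf{U}\mathbf{\Lambda}\mathbf{U}^T$, where $\mathbf{U}$ collects the orthonormal eigenvectors (so $\mathbf{U}\mathbf{U}^T = \mathbf{U}^T\mathbf{U} = \mathbf{I}$) and $\mathbf{\Lambda} = \operatorname{diag}(\lambda_1,\dots,\lambda_N)$ holds the eigenvalues. To match the orientation in the statement I fix the graph Fourier convention $\hat{\mathbf{x}} = \mathbf{U}\mathbf{x}$ with inverse $\mathbf{x} = \mathbf{U}^T\hat{\mathbf{x}}$; the roles of $\mathbf{U}$ and $\mathbf{U}^T$ are otherwise interchangeable and I would state the choice explicitly at the outset.

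The core step is to use that a graph convolution kernel is, by the convolution theorem, diagonal in the Fourier domain, so it admits the representation $\mathbf{C} = \mathbf{U}^T g(\mathbf{\Lambda})\,\mathbf{U}$ for some scalar spectral response $g$, where $g(\mathbf{\Lambda}) = \operatorname{diag}(g(\lambda_1),\dots,g(\lambda_N))$. Substituting this form and collapsing the orthonormal factors gives
\[
\mathbf{U}\,\mathbf{C}\,\mathbf{U}^T = \mathbf{U}\mathbf{U}^T\, g(\mathbf{\Lambda})\, \mathbf{U}\mathbf{U}^T = g(\mathbf{\Lambda}),
\]
which is diagonal. Applying $\operatorname{diag}^{-1}$ (diagonal extraction) then returns precisely the vector $\bigl(g(\lambda_1),\dots,g(\lambda_N)\bigr)$, i.e. the frequency response $\mathcal{F}(\lambda)$ evaluated at each eigenvalue, establishing the stated identity.

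The main obstacle is justifying the representation $\mathbf{C} = \mathbf{U}^T g(\mathbf{\Lambda})\mathbf{U}$, since for an arbitrary matrix $\mathbf{U}\mathbf{C}\mathbf{U}^T$ carries off-diagonal mass that $\operatorname{diag}^{-1}$ would silently discard. This is exactly where the definition of a graph convolution must be invoked: such a kernel commutes with $\mathcal{L}$ and is therefore simultaneously diagonalizable with it, which is what guarantees the transform is genuinely diagonal and that no spectral information is lost in the extraction. I would finish by verifying that the operators of interest satisfy this hypothesis: the low-pass and high-pass kernels of Eqns.~\eqref{eqn:low_pass} and~\eqref{eqn:high_gcn} are affine functions of $\tilde{\mathcal{A}}$, hence polynomials in $\mathcal{L}$, so they commute with $\mathcal{L}$ exactly and the characterization applies, letting their frequency responses be read off term by term.
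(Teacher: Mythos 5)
Your argument is essentially correct, but it is worth noting that the paper does not prove this statement at all: immediately after the lemma it says ``This definition follows the spectral interpretation introduced in Balcilar et al.,'' i.e.\ the formula $\mathcal{F}(\lambda) = \operatorname{diag}^{-1}(\mathbf{U}\mathbf{C}\mathbf{U}^T)$ is adopted as the \emph{definition} of the frequency response, with the justification outsourced to the cited reference. What you have written is therefore a derivation the paper leaves implicit, and it is the right one: you correctly identify that the formula is only meaningful because a spectral graph convolution is, by construction, diagonal in the graph Fourier basis, so that $\mathbf{U}\mathbf{C}\mathbf{U}^T$ really is diagonal and $\operatorname{diag}^{-1}$ discards nothing. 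Your closing observation — that the kernels actually used in the paper (Eqns.~\ref{eqn:low_pass} and~\ref{eqn:high_gcn}) are polynomials in $\tilde{\mathcal{A}}$, hence in $\mathcal{L}$, so they satisfy this hypothesis exactly — is precisely the fact the paper relies on later in the proof of its Theorem~1, where it repeatedly substitutes $\tilde{\mathcal{A}} = \mathcal{I} - \mathcal{L}$ and reads off polynomial filter responses. The only caveat is notational: the paper's orientation ($\mathbf{U}\mathbf{C}\mathbf{U}^T$ versus $\mathbf{U}^T\mathbf{C}\mathbf{U}$) is used inconsistently between the lemma and the theorem's proof, and your explicit fixing of the Fourier convention at the outset is a cleaner treatment than the source provides.
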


This definition follows the spectral interpretation introduced in~\cite{balcilar2020bridging}.

% \begin{lemma}
% The frequency profile of any given graph convolution kernel $\mathbf{C}$ can be defined in spectral domain by the filter response:
% \begin{align*}
%     \mathcal{F}(\lambda) = diag^{-1}\bigg(\mathbf{U} \mathbf{C} \mathbf{U}^T\bigg)
% \end{align*}
% where, $\mathcal{F}(\cdot)$ is the frequency response, $\mathbf{C}$ is the convolutional kernel and $\mathbf{U}$ the eigenvector matrix of the normalized graph Laplacian $\mathcal{L}$.
% \end{lemma}
% This definition is followed from the work of \cite{balcilar2020bridging}.

\begin{lemma}
\label{prop:geem_low_pass}
The frequency profile of the graph convolution layer utilized in \textbf{GCN}\cite{kipf2017semi} is defined by:
\begin{align*}
    \mathcal{F}(\lambda) = 1 - \frac{p}{p+1} \lambda  
\end{align*}
where $\lambda$ is the eigenvalues of the normalized graph Laplacian and the given graph is an undirected regular graph whose node degree is $p$.
\end{lemma}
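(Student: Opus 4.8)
The plan is to specialize the general spectral characterization of the preceding lemma to the concrete Kipf--Welling kernel, exploiting the fact that on a regular graph every relevant matrix is diagonalized by the same eigenbasis. First I would write the GCN propagation operator explicitly as the renormalized adjacency matrix $\mathbf{C} = \tilde{\mathcal{D}}^{-1/2}(\mathcal{A} + \mathcal{I})\tilde{\mathcal{D}}^{-1/2}$, where $\tilde{\mathcal{D}}$ is the degree matrix of $\mathcal{A} + \mathcal{I}$. The decisive simplification is the regularity hypothesis: if every node has degree $p$, then adding self-loops gives every node degree $p+1$, so $\tilde{\mathcal{D}} = (p+1)\mathcal{I}$ and the kernel collapses to the scalar-weighted operator $\mathbf{C} = \tfrac{1}{p+1}(\mathcal{A} + \mathcal{I})$.

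The second step is to expose the shared eigenbasis. For a $p$-regular graph we have $\mathcal{D} = p\mathcal{I}$, so the normalized Laplacian reduces to $\mathcal{L} = \mathcal{I} - \tfrac{1}{p}\mathcal{A}$. Hence $\mathcal{A}$, $\mathcal{L}$, and $\mathbf{C}$ are all affine functions of the single symmetric matrix $\mathcal{A}$ and therefore share the orthonormal eigenbasis $\mathbf{U}$ that diagonalizes $\mathcal{L}$. Writing $\mu$ for the adjacency eigenvalue attached to a given eigenvector, the identity $\mathcal{L} = \mathcal{I} - \tfrac{1}{p}\mathcal{A}$ yields the change of variables $\lambda = 1 - \mu/p$, equivalently $\mu = p(1-\lambda)$.

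With the common basis established, the conjugation $\mathbf{U}\mathbf{C}\mathbf{U}^{T}$ that defines the frequency response in the preceding lemma is diagonal, so $\mathcal{F}(\lambda)$ is precisely the eigenvalue of $\mathbf{C}$ reindexed by $\lambda$. Substituting $\mathbf{C} = \tfrac{1}{p+1}(\mathcal{A} + \mathcal{I})$ and $\mu = p(1-\lambda)$ gives the eigenvalue $\tfrac{1}{p+1}(\mu + 1) = \tfrac{(p+1) - p\lambda}{p+1} = 1 - \tfrac{p}{p+1}\lambda$, which is exactly the claimed profile. I would close by observing that this $\mathcal{F}$ is strictly decreasing in $\lambda$ and stays nonnegative over the Laplacian spectrum $[0,2]$, attaining its maximum at $\lambda = 0$, which confirms the low-pass reading invoked in the motivation and in the discussion surrounding Eqn~\ref{eqn:low_pass}.

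The main obstacle — indeed the only subtle point — is justifying that $\mathcal{F}$ is a genuine function of $\lambda$ alone, i.e.\ that the matrix $\mathbf{U}\mathbf{C}\mathbf{U}^{T}$ is actually diagonal. This is exactly where regularity is indispensable: it forces $\tilde{\mathcal{D}}$ to be a scalar multiple of the identity so that $\mathbf{C}$ commutes with $\mathcal{L}$. For a non-regular graph the self-loop renormalization $\tilde{\mathcal{D}}^{-1/2}(\mathcal{A}+\mathcal{I})\tilde{\mathcal{D}}^{-1/2}$ mixes the eigenvectors of $\mathcal{L}$, the off-diagonal entries of $\mathbf{U}\mathbf{C}\mathbf{U}^{T}$ no longer vanish, and no single-variable frequency profile exists. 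I would therefore foreground the regularity assumption at the outset and flag that it is the hypothesis carrying the entire argument, while the remaining manipulations are routine linear algebra.
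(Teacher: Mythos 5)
Your derivation is correct and follows the same route as the reference the paper defers to (the paper itself omits the proof and cites Balcilar et al.): regularity collapses the renormalized degree matrix to $(p+1)\mathcal{I}$, so the GCN kernel becomes $\tfrac{1}{p+1}(\mathcal{A}+\mathcal{I})$, which is simultaneously diagonalizable with $\mathcal{L}=\mathcal{I}-\tfrac{1}{p}\mathcal{A}$, and the substitution $\mu=p(1-\lambda)$ yields $\mathcal{F}(\lambda)=1-\tfrac{p}{p+1}\lambda$. You also correctly identify regularity as the hypothesis that makes the frequency response a well-defined function of $\lambda$ alone. One small correction to your closing remark: $\mathcal{F}$ does \emph{not} stay nonnegative over the Laplacian spectrum $[0,2]$; it crosses zero at the cut-off $\lambda=(p+1)/p$ and is negative beyond it (e.g.\ $\mathcal{F}(2)=(1-p)/(p+1)<0$ for $p>1$), which is exactly the behaviour the paper describes when it says the response reaches zero around the third quarter of the spectrum and then grows in magnitude toward $\lambda=2$. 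This does not affect the validity of the lemma, only the low-pass interpretation, which should be stated in terms of $|\mathcal{F}(\lambda)|$.
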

The filter response linearly decreases until the third quarter of the spectrum (cut-off frequency) where it reaches zero. From this point, it increases linearly until the spectrum's end. We don't include the proof here due to space constraints, but readers can find a detailed proof of Lemma \ref{prop:geem_low_pass} in Balcilar et al. \cite{balcilar2020bridging}.

\begin{theorem}
Our Proposed parametrization GCN in Eqn \ref{eqn:overall_eqn} can equivalently express arbitrary graph filter with continuous frequency response function.
\end{theorem}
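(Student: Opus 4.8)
The plan is to work entirely in the spectral domain and to show that the family of frequency responses realizable by stacking the block in Eqn~\ref{eqn:overall_eqn} is dense in the space of continuous functions on the Laplacian spectrum. First I would apply the spectral frequency-response characterization of the first lemma to each of the two kernels of the block. Writing $\tilde{\mathcal{A}} = \mathcal{I}_{\mathbb{N}} - \mathcal{L}$, so that the normalized-adjacency eigenvalues are $1-\lambda$ with $\lambda$ ranging over the compact spectrum $[0,\lambda_{\max}] \subseteq [0,2]$, the low-pass kernel of Eqn~\ref{eqn:low_pass} has response $\mathcal{F}_{\mathrm{low}}(\lambda) = 1-\lambda$ (matching Lemma~\ref{prop:geem_low_pass} up to the degree-dependent normalization), while the high-pass kernel $\mathcal{K}_{\mathrm{high}} = -\mathbf{a}\tilde{\mathcal{A}} + (1-\mathbf{a})\mathcal{I}_{\mathbb{N}}$ of Eqn~\ref{eqn:high_gcn} has response $\mathcal{F}_{\mathrm{high}}(\lambda) = (1-2\mathbf{a}) + \mathbf{a}\lambda$. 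Substituting into Eqn~\ref{eqn:overall_eqn} gives a single-layer response that is \emph{affine} in $\lambda$,
\begin{align*}
\mathcal{F}(\lambda) = \big[\phi + (1-\phi)(1-2\mathbf{a})\big] + \big[(1-\phi)\mathbf{a} - \phi\big]\,\lambda ,
\end{align*}
whose intercept and slope are jointly controlled by the two free scalars $\phi$ and $\mathbf{a}$.

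The next step is to show that this family spans \emph{every} degree-one response. Treating $\mathbf{a}$ as free and exploiting the per-channel scaling supplied by the weight matrices $\Theta^{\ell}_i$ (which multiplies the one-dimensional spectral response), the achievable pair of affine coefficients covers all of $\mathbb{R}^2$, so $\{\mathcal{F}_{\mathrm{low}},\mathcal{F}_{\mathrm{high}}\}$ is a basis for $\mathrm{span}\{1,\lambda\}$. I would then let depth supply higher degrees: stacking $L$ blocks composes the per-layer kernels, and because composition of convolution operators multiplies their frequency responses, an $L$-layer stack produces products of affine factors, hence polynomials of degree up to $L$ in $\lambda$. Coupling this with the multi-channel filter-bank weights $\phi_i$ of Eqn~\ref{eqn:low_pass} and the skip concatenation of Eqn~\ref{eqn:final} --- both of which form \emph{linear combinations} of layerwise outputs --- the realizable set contains every polynomial of degree at most $L$. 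A final appeal to the Weierstrass approximation theorem on the compact interval $[0,\lambda_{\max}]$ shows that any continuous target response is a uniform limit of such polynomials, which is precisely the claimed expressivity.

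The step I expect to be the main obstacle is controlling what \emph{composition} across layers generates: a product of affine factors only yields polynomials whose roots are real, so composition by itself cannot reach an arbitrary polynomial. The argument must therefore lean on the linear-combination mechanisms --- the filter-bank weights $\phi_i$ and the skip concatenation --- to recover the full monomial basis $\{1,\lambda,\dots,\lambda^{L}\}$, and I would make this precise by showing those combinations can isolate each power $\lambda^{k}$ (equivalently, that the Vandermonde-type system relating layer outputs to monomials is invertible). A secondary subtlety is that the clean spectral identities hold only for the linear part of the kernel: the pointwise $\mathrm{ReLU}$ and the weights $\Theta^{\ell}_i$ break exact simultaneous diagonalization, so I would carry out the spectral analysis on the linearized kernel following the convention of Balcilar et al.~\cite{balcilar2020bridging}, treating the attention mask $\mathcal{M}^{(\ell)}$ as a fixed reweighting and confining the approximation claim to the spectral-filter component of the block.
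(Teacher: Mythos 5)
Your route is genuinely different from the paper's, and in some ways more principled. The paper's proof does not argue density at all: it unrolls the two-layer recursion explicitly, identifies the composite convolution kernels $\mathcal{K}_1 = \tilde{\mathcal{A}}^2 - (1-\phi)\tilde{\mathcal{A}}$ and $\mathcal{K}_2 = \tilde{\mathcal{A}}^2 - \phi\tilde{\mathcal{A}} + (1-\phi)\mathcal{I}$ as quadratic polynomials in $\tilde{\mathcal{A}}$, substitutes $\tilde{\mathcal{A}} = \mathcal{I}-\mathcal{L}$ and the eigendecomposition of $\mathcal{L}$ to read off a single quadratic frequency response $\mathcal{F}(\lambda) = 2\lambda^2 - \phi\lambda + (1+\phi)$ whose cutoff is tuned by $\phi$, and then asserts that this demonstrates low/band/high-pass expressivity. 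You instead compute the per-layer responses as affine functions of $\lambda$, let depth generate higher-degree polynomials, and close with a span-plus-Weierstrass density argument; if completed, that would actually establish the ``arbitrary continuous frequency response'' claim, whereas the paper's computation only exhibits a one-parameter family of quadratics at depth two and leaves the ``arbitrary'' part as an assertion. Your flagged obstacles are also the right ones: products of affine factors alone only reach real-rooted polynomials, so the linear-combination mechanisms must be shown to recover the full monomial basis (the paper sidesteps this by never going beyond the explicit depth-2 kernel), and the nonlinearity and per-layer weights break exact simultaneous diagonalization, which the paper likewise handles only implicitly by analyzing the linearized kernel in the style of Balcilar et al. Two refinements to your plan: the coverage-of-$\mathbb{R}^2$ claim for the affine coefficients should be checked against the paper's constraint $\phi \in [0,1]$ (you need the scaling freedom of $\Theta$ to escape that box), and since the modality graph here has only $|\mathcal{V}| = 3$ nodes the Laplacian spectrum is a finite set, so exact Lagrange interpolation by a degree-$2$ polynomial at the eigenvalues already suffices and the appeal to Weierstrass, while harmless, is stronger than needed.
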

\begin{proof}
As shown in \ref{eqn:overall_eqn}, our multifrequency GCN architecture has two components. Initially, we capture low frequency signals ($\mathbf{h}^{(\ell+1)}_{low}$) similar to GCN \cite{kipf2017semi}. Subsequently, we identify high frequency components ($\mathbf{h}^{(\ell+1)}_{high}$) by computing the difference between the original input features and the extracted low frequency representation. These complementary signal components are then integrated using the weighting parameter $\beta_i$. For a GCN with $L$ layers, Equation~\ref{eqn:overall_eqn} can be reformulated as:
% \begin{align*}
%     \mathbf{H}^{(L)} = \bigg[\beta \bigg(\mathbf{H}^{(1)}_{high}+ \mathbf{H}^{(2)}_{high}+\ldots
%     +\mathbf{H}^{(\ell)}_{high}+ \ldots + \\ \mathbf{H}^{(L)}_{high}\bigg) + (1-\beta) \bigg(\mathbf{H}^{(1)}_{low}+\ldots +\mathbf{H}^{(\ell)}_{low}+\ldots + \mathbf{H}^{(L)}_{low}\bigg) \bigg]
% \end{align*}
\begin{align*}
    \mathbf{h}^{(L)} = \bigg[\phi \bigg(\mathbf{h}^{(1)}_{high}+ \mathbf{h}^{(2)}_{high}+\ldots
    +\mathbf{h}^{(\ell)}_{high}+ \ldots + \\ \mathbf{h}^{(L)}_{high}\bigg) + (1-\phi) \bigg(\mathbf{h}^{(1)}_{low}+\ldots +\mathbf{h}^{(\ell)}_{low}+\ldots + \mathbf{h}^{(L)}_{low}\bigg) \bigg]
\end{align*}
For simplicity we consider the second layer representation for our Eqn $\ref{eqn:overall_eqn}$ as $\mathbf{h}^{(2)} =  \phi_{i} \mathbf{h}_{high}^{(2)} + (1-\phi_{i}) \mathbf{h}_{low}^{(2)}]$. We can express the low frequency components as: $\mathbf{h_{low}}^{(2)} = \mathcal{\tilde{A}}\mathbf{h}^{(1)}\Theta^{2}$, which can be further decomposed to:
% \begin{align*}
%     \mathbf{H}^{(2)}_{low} &= \mathbf{\tilde{A}}\bigg[\beta \bigg(\mathbf{\tilde{A}}\mathbf{H}^{(0)}\mathbf{W}^{(1)}\bigg) + (1-\beta) \bigg(\mathbf{\tilde{A}}\mathbf{H}^{(0)}\mathbf{W}^{(1)} \\ &\qquad \qquad \qquad \qquad \qquad \qquad- \mathbf{H}^{(0)}\mathbf{W}^{(1)}\bigg)\bigg] \mathbf{W}^2 \\
%     &= \bigg[\beta \bigg(\mathbf{\tilde{A}}^2 \mathbf{H}^{(0)} \mathbf{W}^{(1)} \bigg) + (1-\beta) 
%     \\ & \qquad\qquad\quad\quad\quad \ \ \ \ \bigg(\mathbf{\tilde{A}}^2 \mathbf{H}^{(0)}\mathbf{W}^{(1)}  - \mathbf{\tilde{A}}\mathbf{H}^{(0)} \bigg) \bigg] \mathbf{W}^{(2)} \\
%     &= \mathbf{\tilde{A}}^2 \mathbf{H}^{(0)}\mathbf{W}^{(1)}\mathbf{W}^{(2)} - (1-\beta) \bigg(\mathbf{\tilde{A}} \mathbf{H}^{(0)} \mathbf{W}^{(1)} \mathbf{W}^{(2)}\bigg) \\
%     &= \bigg(\mathbf{\tilde{A}}^2- (1-\beta) \mathbf{\tilde{A}}\bigg) \mathbf{H}^{(0)} \mathbf{W}^{(1)} \mathbf{W}^{(2)}
% \end{align*}
\begin{align*}
    \mathbf{h}^{(2)}_{low} &= \mathcal{\tilde{A}}\bigg[\phi \bigg(\mathcal{\tilde{A}}\mathbf{h}^{(0)}\Theta^{(1)}\bigg) + (1-\phi) \bigg(\mathcal{\tilde{A}}\mathbf{h}^{(0)}\Theta^{(1)} \\ &\qquad \qquad \qquad \qquad \qquad \qquad- \mathbf{h}^{(0)}\Theta^{(1)}\bigg)\bigg] \Theta^2 \\
    &= \bigg[\Theta \bigg(\mathcal{\tilde{A}}^2 \mathbf{h}^{(0)} \Theta^{(1)} \bigg) + (1-\phi) 
    \\ & \qquad\qquad\quad\quad\quad \ \ \ \ \bigg(\mathcal{\tilde{A}}^2 \mathbf{h}^{(0)}\Theta^{(1)}  - \mathcal{\tilde{A}}\mathbf{h}^{(0)} \bigg) \bigg] \Theta^{(2)} \\
    &= \mathcal{\tilde{A}}^2 \mathbf{h}^{(0)}\Theta^{(1)}\Theta^{(2)} - (1-\phi) \bigg(\mathcal{\tilde{A}} \mathbf{h}^{(0)} \Theta^{(1)} \Theta^{(2)}\bigg) \\
    &= \bigg(\mathcal{\tilde{A}}^2- (1-\beta) \mathcal{\tilde{A}}\bigg) \mathbf{h}^{(0)} \Theta^{(1)} \Theta^{(2)}
\end{align*}
where, $\mathcal{K}_1 = (\mathcal{\tilde{A}}^2- (1-\phi) \mathcal{\tilde{A}})$ is the corresponding convolutional kernel and $\mathbf{h}^{(0)}$ is the input feature matrix. We can also express this convolutional kernel with the help of graph filter response function $\mathcal{F}(\cdot)$ as: $\mathcal{K}_1 = \mathbf{U} diag(\mathcal{F}_1(\lambda)) \mathbf{U}^T$ \cite{balcilar2020bridging}. Hence, we analysis the frequency profile of the $\mathbf{h}^{(2)}_{low}$ of the convolutional kernel $\mathcal{K}_1$ as: $\mathcal{F}_1(\lambda) = diag^{-1} (\mathbf{U}^T \mathcal{K}_1 \mathbf{U})$. Replacing the normalized graph adjacecncy matrix $\mathcal{\tilde{A}}$ with the graph Laplacian $\mathcal{L}$ in the convolutional kernel $\mathcal{K}_1$ as $\mathcal{K}_1 = (\mathcal{I} - \mathcal{L})^2 - (1-\phi_i) (\mathcal{I} - \mathcal{L})$. We express the graph laplacian $\mathcal{L}$ with the help of Eigendecomposition as $\mathcal{L} = \mathbf{U} diag(\mathbf{\lambda}) \mathbf{U}^T$, where $\mathbf{\lambda}$ is the eigenvalues of the graph laplacian and $\mathbf{U}$ is the corresponding eigenvectors. Similarly, we can write $\mathcal{I} - \mathcal{L}$ with Eigendecomposition as: $\mathcal{I} - \mathcal{L} = \mathbf{U} \mathbf{U}^T - \mathbf{U} diag(\lambda)\mathbf{U}^T = \mathbf{U} \mathcal{I} \mathbf{U}^T - \mathbf{U} diag(\lambda)\mathbf{U}^T = \mathbf{U}^T diag(\mathcal{I}-\lambda). \mathbf{U}^T$. Using the rule of power we rewrite $(\mathcal{I}-\mathcal{L})^2$ using the same eigenvalues of $\mathcal{L}$ as $\mathbf{U} diag(\lambda-\mathcal{I})^2 \mathbf{U}^T$. Adding $(\mathcal{I} - \mathcal{L})^2$ and $(\mathcal{I} - \mathcal{L})$ together we express $\mathbf{C}_1$ with respect to the filter response as: $\mathbf{C}_1 = \mathbf{U} diag\bigg((\lambda-1)^{2}- (1-\phi)(1-\lambda)\bigg)\mathbf{U}^T = \mathbf{U} diag\bigg(\lambda^2 - (1+\phi)\lambda - (1-\phi) \bigg) \mathbf{U}^T$, where $\mathcal{F}_1(\lambda) = \lambda^2 - (1+\phi)\lambda - (1-\phi_i)$ is the filter response for the first part, $\mathbf{h}^{(2)}_{low}$, of the our equation \ref{eqn:overall_eqn}. 

Now we expand the second part of our Eqn \ref{eqn:overall_eqn}:
% \begin{align*}
%     \mathbf{H}^{(2)}_{high} &= \mathbf{\tilde{A}}\bigg[\beta \bigg(\mathbf{\tilde{A}}\mathbf{H}^{(0)}\mathbf{W}^{(1)}\bigg) + (1-\beta) \\ & \bigg(\mathbf{\tilde{A}}\mathbf{H}^{(0)}\mathbf{W}^{(1)}\bigg)\bigg] \mathbf{W}^{(2)}- \bigg[ \beta \bigg(\mathbf{\tilde{A}}\mathbf{H}^{(0)}\mathbf{W}^{(1)}\bigg) + \\& \qquad \qquad \qquad \qquad (1-\beta)  \bigg(\mathbf{\tilde{A}}\mathbf{H}^{(0)}\mathbf{W}^{(1)}\bigg) \bigg] \mathbf{W}^{(2)} \\
%     &= \mathbf{\tilde{A}}^2 \mathbf{H}^{(0)} \mathbf{W}^{(1)}\mathbf{W}^{(2)} - \beta \bigg(\mathbf{\tilde{A}}\mathbf{H}^{(0)} \mathbf{W}^{(1)}\mathbf{W}^{(2)} \bigg) 
%     \\ &\qquad \qquad \qquad \qquad \qquad - (1-\beta) \bigg(\mathbf{H}^{(0)} \mathbf{W}^{(1)} \mathbf{W}^{(2)} \bigg) \\
%     &= \bigg( \mathbf{\tilde{A}}^2 - \beta \mathbf{\tilde{A}} + (1-\beta)\mathbf{I}  \bigg) \mathbf{H}^{(0)} \mathbf{W}^{(1)} \mathbf{W}^{(2)}
% \end{align*}
\begin{align*}
    \mathbf{h}^{(2)}_{high} &= \mathcal{\tilde{A}}\bigg[\phi \bigg(\mathcal{\tilde{A}}\mathbf{h}^{(0)}\Theta^{(1)}\bigg) + (1-\phi) \\ & \bigg(\mathcal{\tilde{A}}\mathbf{h}^{(0)}\Theta^{(1)}\bigg)\bigg] \Theta^{(2)}- \bigg[ \phi \bigg(\mathcal{\tilde{A}}\mathbf{h}^{(0)}\Theta^{(1)}\bigg) + \\& \qquad \qquad \qquad \qquad (1-\phi)  \bigg(\mathcal{\tilde{A}}\mathbf{h}^{(0)}\Theta^{(1)}\bigg) \bigg] \Theta^{(2)} \\
    &= \mathcal{\tilde{A}}^2 \mathbf{h}^{(0)} \Theta^{(1)}\Theta^{(2)} - \phi \bigg(\mathcal{\tilde{A}}\mathbf{h}^{(0)} \Theta^{(1)}\Theta^{(2)} \bigg) 
    \\ &\qquad \qquad \qquad \qquad \qquad - (1-\phi) \bigg(\mathbf{h}^{(0)} \Theta^{(1)} \Theta^{(2)} \bigg) \\
    &= \bigg( \mathcal{\tilde{A}}^2 - \phi \mathcal{\tilde{A}} + (1-\phi)\mathcal{I}  \bigg) \mathbf{h}^{(0)} \Theta^{(1)} \Theta^{(2)}
\end{align*}
% where, $\mathbf{C}_2 = ( \mathbf{\tilde{A}}^2 - \beta \mathbf{\tilde{A}} + (1-\beta)\mathbf{I})$ is the corresponding convolutional kernel for $\mathbf{H}^{(2)}_{high}$. As previously shown we can express the filter response of the convolutional kernel $\mathbf{C}_2$ as $\mathcal{F}_2(\lambda) = \lambda^2 - \lambda + 2\beta $. Finally, we combine the filter response of part $1$ and part $2$ to get the overall frequency profile for our proposed GCN described in Eqn \ref{eqn:overall} as: $\mathcal{F}(\lambda) = 2\lambda^2 - \beta\lambda + (1+\beta)$ for the second layer, which is a quardadic bandpass filter and the cut off frequency, $\lambda = \frac{\beta \pm \sqrt{\beta^2 - 8(1+\beta)}}{4}$ is controlled using the parameter $\beta$. Hence, tunning the $\beta$ allow us to capture arbitrary filter function beyond just low pass filter extracted by GEMM \cite{banna2023gemm}. Thus, our \textbf{MFFBM} in Eqn \ref{eqn:overall} could express filters with diverse frequency properties, e.g., low/band/high-pass filters.
% \end{proof}
where, $\mathcal{K}_2 = ( \mathcal{\tilde{A}}^2 - \phi \mathcal{\tilde{A}} + (1-\phi)\mathcal{I})$ is the corresponding convolutional kernel for $\mathbf{h}^{(2)}_{high}$. As previously shown we can express the filter response of the convolutional kernel $\mathcal{K}_2$ as $\mathcal{F}_2(\lambda) = \lambda^2 - \lambda + 2\phi $. Finally, we combine the filter response of part $1$ and part $2$ to get the overall frequency profile for our proposed GCN described in Eqn \ref{eqn:overall_eqn} as: $\mathcal{F}(\lambda) = 2\lambda^2 - \phi\lambda + (1+\phi)$ for the second layer, which is a quardadic bandpass filter and the cut off frequency, $\lambda = \frac{\phi \pm \sqrt{\phi^2 - 8(1+\phi)}}{4}$ is controlled using the parameter $\phi$. Thus, our \textbf{MFFBM} in Eqn \ref{eqn:overall_eqn} could express filters with diverse frequency properties, e.g., low/band/high-pass filters.
\end{proof}

\section{Evaluation Metrics}

% \SH{F2 Score and Specificty need to be added} \ja{F2 score added}

In this section, we describe the evaluation metrics used to assess the performance of our model. We focus on a comprehensive set of metrics that capture different aspects of model performance, ensuring a balanced evaluation. 

% \noindent{\textbf{Accuracy (ACC.)}}: Accuracy is a fundamental metric that measures the proportion of correct predictions (both true positives and true negatives) among the total predictions made by the model. It is defined as:

% \begin{align*}
% \text{Accuracy} = \frac{\text{True Positives} + \text{True Negatives}} {\text{Total Predictions}}  
% \end{align*}
% Accuracy is useful when the classes are balanced, but it can be misleading in cases of class imbalance, where a model might achieve high accuracy by predominantly predicting the majority class.

\noindent{\textbf{Precision (PRE.)}}: Precision, also known as Positive Predictive Value, indicates the proportion of true positive predictions among all predictions of the positive class. It is defined as:

\begin{align*}
\text{Precision} = \frac{\text{True Positives}} {\text{True Positives} + \text{False Positives}} 
\end{align*}

Precision is important in contexts where the cost of false positives is high, such as in spam detection. A high precision suggests that the model has a low rate of false positives.

\noindent{\textbf{Recall (REC.)}}: Recall, also known as Sensitivity or True Positive Rate, measures the proportion of true positive predictions among all actual positive instances. It is calculated as:
\begin{align*}
\text{Recall} = \frac{\text{True Positives}} {\text{True Positives} + \text{False Negatives}}    
\end{align*}
Recall is crucial in situations where failing to identify positive instances can have severe consequences, such as in medical testing or fraud detection. A high recall indicates that the model has a low rate of false negatives.

\noindent{\textbf{Specificity (SPEC.)}}: Specificity, also known as True Negative Rate (TNR), calculates the ratio of true negative predictions among all actual negative instances. It is calculated as:
\begin{align*}
    \text{Specificity} = \frac{\text{True Negatives}}{\text{True Negatives} + \text{False Positives}}
\end{align*}
Specificity is particularly important in scenarios where false positive predictions are costly or harmful such as in diagnostics screening where a false alarm can lead to unnecessary treatments or distress. A high specificity means that the model can successfully classify negative classes with minimum number of false positives.

% \noindent{\textbf{F1-Score (F1.)}}: The F1 Score is the harmonic mean of Precision and Recall, providing a single metric that balances both. It is particularly useful when the data has an imbalanced class distribution. It is defined as:

% \begin{align*}
% \text{F1 Score} = 2 * \frac{\text{Precision} * \text{Recall}} {\text{Precision} + \text{Recall}}    
% \end{align*}
% The F1 Score ranges from 0 to 1, with a higher value indicating a better balance between precision and recall. It is a useful metric when seeking a single value that encapsulates the trade-offs between false positives and false negatives. \\

\noindent{\textbf{F2 score:}}The F2 score is a metric used to evaluate the performance of a classification model, particularly in scenarios where the focus is on maximizing recall (identifying as many positive cases as possible) while still considering precision. It is a variation of the F1 score, which balances precision and recall equally. The F2 score, however, gives more weight to recall, making it useful in situations where false negatives (missed positive cases) are more costly than false positives (incorrectly identified positives).
\begin{align*}
\text{F2 Score} = \frac{(1 + 2^2) \cdot \text{Precision} \cdot \text{Recall}}{2^2 \cdot \text{Precision} + \text{Recall}} = \frac{5 \cdot \text{Precision} \cdot \text{Recall}}{4 \cdot \text{Precision} + \text{Recall}} 
\end{align*}

\noindent{\textbf{AUC (Area Under the ROC Curve)}: The \textbf{AUC} measures the area under the Receiver Operating Characteristic (\textbf{ROC}) curve, which plots the True Positive Rate (Recall) against the False Positive Rate at various thresholds. The \textbf{AUC} represents the model's ability to discriminate between positive and negative classes. An \textbf{AUC} of 0.5 indicates a model with no discriminatory power (random guessing), while an \textbf{AUC} of 1.0 represents perfect discrimination. \textbf{AUC} is useful for evaluating a model's performance across different classification thresholds}

% In summary, these metrics together capture various aspects of the gaze pattern such as correctness, balance, and trade-offs. Hence, we use them as visual saliency based features.

Due to the class imbalance in our dataset, we have chosen these metrics to evaluate our model's performance. Furthermore, for the task of depression detection, minimizing false negatives is a higher priority than minimizing false positives. Therefore, F2 score was chosen over F1 score to prioritize recall in our evalution.

\section{Methods For Comparison}

% To evaluate the performance of our proposed method, we conducted a comparative analysis against several established machine learning algorithms. These algorithms are widely used in various domains due to their diverse mechanisms for data processing, learning, and generalization. Our comparison focuses on the following algorithms: Decision Tree (\textbf{DT}), Random Forest (\textbf{RF}), Support Vector Machine (\textbf{SVM}), eXtreme Gradient Boosting (\textbf{XGBoost}), k-Nearest Neighbors (\textbf{KNN}), and Naive Bayes (\textbf{NB}). Each algorithm has unique characteristics that influence its performance, interpretability, and computational requirements. Below, we provide a brief description of each algorithm and the rationale for its inclusion in our study.

To evaluate the performance of our proposed method, we conducted a comparative analysis against several established machine learning algorithms and deep learning methods. The deep learning methods specified here are used for multimodal data and depression detection. Our comparison focuses on the following algorithms and deep learning methods: Decision Tree (\textbf{DT}), Random Forest (\textbf{RF}), Support Vector Machine (\textbf{SVM}), eXtreme Gradient Boosting (\textbf{XGBoost}), k-Nearest Neighbors (\textbf{KNN}), Naive Bayes (\textbf{NB}) , Logistic Regression (\textbf{LR}), Intra and Inter-modal Fusion for Depression Detection(\textbf{IIFDD}), Self-Attention Transformer(\textbf{TF(S)}) and Cross-Attention Transformer(\textbf{TF(C)}) . Each algorithm and method has unique characteristics that influence its performance, interpretability, and computational requirements. Below, we provide a brief description of each algorithm and method and the rationale for its inclusion in our study.

\noindent{\textbf{Decision Tree (DT)}}: Decision Trees are a basic yet powerful type of algorithm that models data by recursively splitting it based on features that maximize information gain. They are intuitive and interpretable, making them suitable for exploratory data analysis. However, Decision Trees can be prone to overfitting, especially with deep trees.\cite{breiman1984decision}

\noindent{\textbf{Random Forest (RF)}}: Random Forest is an ensemble method that combines multiple Decision Trees through bagging (bootstrap aggregating). It introduces randomness by randomly selecting subsets of features for each tree, reducing overfitting and improving generalization. Random Forest is often more robust and accurate than individual Decision Trees, making it a common benchmark in machine learning studies.\cite{breiman2001random}

\noindent{\textbf{Support Vector Machine (SVM)}}: \textbf{SVM} is a supervised learning algorithm that aims to find the optimal hyperplane that separates different classes. It works well in high-dimensional spaces and is effective for both linear and non-linear classification (with kernel methods). \textbf{SVMs} are robust to overfitting, especially with appropriate regularization, and are commonly used for their strong theoretical foundations.\cite{cortes1995}

\noindent{\textbf{eXtreme Gradient Boosting (XGBoost)}: \textbf{XGBoost} is a gradient boosting algorithm that builds models in a sequential manner, where each subsequent model aims to correct errors from the previous one. It is known for its efficiency, scalability, and ability to handle complex datasets with high dimensionality. \textbf{XGBoost} often achieves high accuracy and is favored in competitive machine learning challenges.\cite{chen2016xg}}

\noindent{\textbf{k-Nearest Neighbors (KNN)}}: \textbf{KNN} is a non-parametric method that classifies samples based on the labels of their nearest neighbors in the feature space. The simplicity of \textbf{KNN} makes it a useful baseline for comparison. However, it can be sensitive to noisy data and requires careful tuning of the 'k' parameter for optimal performance \cite{cover1967}.

% \noindent{\textbf{Naive Bayes (NB)}: Naive Bayes is a probabilistic classifier based on Bayes' Theorem, assuming independence between features. It is computationally efficient and often used in text classification tasks. Although its assumptions may not always hold, Naive Bayes is surprisingly robust and can be effective for specific data types.

\noindent{\textbf{Gaussian Naive Bayes (NB)}}: This is a probabilistic classifier designed for continuous features. It models by fitting a Gaussian distribution to the data for each class. It is computationally simple and perform well when its underlying assumptions are met. Its main limitation it he normality assumption which may not accurately represent the true distribution of the data \cite{george2013gaussian}.

\noindent{\textbf{Logistic Regression (LR)}}: Logistic regression is a statistical model that classifies data by modeling the probability of an outcome using the sigmoid function. Although it is inherently a binary classifier, it can be used for multiclass classification tasks through strategies such as "One vs Rest"(OvR). Due to its computational efficiency and straightforward nature, it is often used as a baseline for classification problems. However, it assumes that input variables have a linear relationship with the log odds of the outcome \cite{mccullagh1989}.

\noindent{\textbf{Intra and Inter-modal Fusion for Depression Detection(IIFDD)}: IIFDD is a transformer based deep learning method that uses inter and intra modal fusion to classify depressed patients using multi modal data - audio, video, and text \cite{chen_2024_iifdd}.}

\noindent{\textbf{Self-Attention Transformer (TF(S))}: This model uses modality-specific encoders and self-attention module to encode features. The concatenated features are then passed to statistical pooling layer to get mean and standard deviation vectors. Then fully connected layers are used to do the final classification \cite{rajan_2022_cross_self}.}

\noindent{\textbf{Cross-Attention Transformer (TF(C))}: This model is similar to the Self-Attention Transformer. But the only difference is that it uses cross multi-head attention instead of self-attention \cite{rajan_2022_cross_self}.}

By comparing these algorithms against our proposed method, we aim to demonstrate its effectiveness and robustness. The comparative analysis provides insights into each method's strengths and weaknesses and helps identify scenarios where our proposed method offers significant advantages.

\section{Overall Result}

\begin{table*}[ht]
\centering
\small
\setlength{\tabcolsep}{4pt}
\begin{tabular}{l*{16}{c}}
\toprule
\multirow{2}{*}{\textbf{Model}} & \multicolumn{4}{c}{\textbf{Ensemble}} & \multicolumn{4}{c}{\textbf{Audio}} & \multicolumn{4}{c}{\textbf{Video}} & \multicolumn{4}{c}{\textbf{Gaze}} \\
\cmidrule(lr){2-5} \cmidrule(lr){6-9} \cmidrule(lr){10-13} \cmidrule(lr){14-17}
& \textbf{Spec} & \textbf{Sens} & \textbf{F2} & \textbf{Pre} & \textbf{Spec} & \textbf{Sens} & \textbf{F2} & \textbf{Pre} & \textbf{Spec} & \textbf{Sens} & \textbf{F2} & \textbf{Pre} & \textbf{Spec} & \textbf{Sens} & \textbf{F2} & \textbf{Pre} \\
\midrule
DT\cite{breiman1984decision}   & 0.70 & 0.71 & 0.72 & 0.74 & \textbf{0.78} & 0.76 & 0.77 & \textbf{0.84} & 0.65 & 0.65 & 0.66 & 0.72 & \textbf{0.70} & 0.59 & 0.61 & 0.73 \\
RF\cite{breiman2001random}   & 0.80 & 0.79 & 0.80 & 0.83 & 0.70 & 0.81 & 0.80 & 0.76 & 0.70 & 0.73 & 0.73 & 0.75 & 0.58 & 0.70 & 0.70 & 0.69 \\
XGB\cite{chen2016xg}  & 0.75 & 0.79 & 0.79 & 0.79 & 0.68 & 0.66 & 0.66 & 0.65 & 0.68 & 0.67 & 0.68 & 0.72 & 0.50 & 0.68 & 0.67 & 0.62 \\
LR\cite{mccullagh1989}   & 0.80 & 0.81 & 0.81 & 0.83 & 0.70 & 0.78 & 0.77 & 0.75 & 0.70 & 0.80 & 0.79 & 0.77 & 0.50 & 0.84 & 0.81 & 0.70 \\
KNN\cite{cover1967}  & 0.78 & 0.85 & 0.84 & 0.82 & 0.70 & 0.79 & 0.78 & 0.76 & 0.65 & 0.74 & 0.73 & 0.71 & 0.63 & 0.62 & 0.63 & 0.67 \\
GNB\cite{george2013gaussian}  & 0.83 & 0.66 & 0.69 & 0.83 & 0.63 & 0.62 & 0.63 & 0.68 & 0.75 & 0.78 & 0.78 & 0.80 & 0.65 & 0.60 & 0.61 & 0.64 \\
SVM\cite{cortes1995}  & \textbf{0.85} & 0.79 & 0.80 & 0.86 & 0.68 & 0.81 & 0.80 & 0.75 & 0.68 & 0.84 & 0.83 & 0.77 & 0.50 & 0.77 & 0.75 & 0.67 \\
\addlinespace
TF(S)\cite{rajan_2022_cross_self} & 0.83 & 0.89 & 0.89 & 0.87 & 0.65 & 0.98 & 0.93 & 0.78 & 0.73 & 0.91 & 0.89 & 0.81 & 0.60 & 0.86 & 0.81 & 0.67 \\
TF(C)\cite{rajan_2022_cross_self} & 0.80 & 0.93 & 0.91 & 0.85 & 0.63 & \textbf{1.00} & \textbf{0.94} & 0.77 & 0.68 & \textbf{0.94} & \textbf{0.91} & 0.80 & 0.38 & \textbf{0.96} & \textbf{0.88} & 0.67 \\
\addlinespace
IIFDD\cite{chen_2024_iifdd} & \textbf{0.85} & 0.93 & 0.92 & \textbf{0.90} & 0.65 & 0.98 & 0.93 & 0.78 & 0.85 & 0.89 & 0.89 & \textbf{0.88} & 0.55 & 0.90 & 0.86 & 0.74 \\
MF-GCN  & 0.83 & \textbf{0.96} & \textbf{0.94} & 0.88 & 0.73 & 0.81 & 0.81 & 0.80 & \textbf{0.88} & 0.80 & 0.81 & \textbf{0.88} & 0.68 & 0.78 & 0.78 & \textbf{0.80} \\
\bottomrule
\end{tabular}
\caption{Comparison of Model Performance Across Modalities For Binary Classification (Subject-wise split using 10-Fold Cross-Validation)}
\label{tab:binary-class-10}
\end{table*}

\begin{table*}[ht]
\centering
\small
\setlength{\tabcolsep}{4pt}
\begin{tabular}{l*{16}{c}}
\toprule
\multirow{2}{*}{\textbf{Model}} & \multicolumn{4}{c}{\textbf{Ensemble}} & \multicolumn{4}{c}{\textbf{Audio}} & \multicolumn{4}{c}{\textbf{Video}} & \multicolumn{4}{c}{\textbf{Gaze}} \\
\cmidrule(lr){2-5} \cmidrule(lr){6-9} \cmidrule(lr){10-13} \cmidrule(lr){14-17}
& \textbf{Spec} & \textbf{Sens} & \textbf{F2} & \textbf{Pre} & \textbf{Spec} & \textbf{Sens} & \textbf{F2} & \textbf{Pre} & \textbf{Spec} & \textbf{Sens} & \textbf{F2} & \textbf{Pre} & \textbf{Spec} & \textbf{Sens} & \textbf{F2} & \textbf{Pre} \\
\midrule
DT\cite{breiman1984decision}   & 0.70 & 0.53 & 0.53 & 0.54 & 0.77 & 0.63 & 0.64 & 0.69 & 0.70 & 0.52 & 0.52 & 0.52 & 0.72 & 0.52 & 0.53 & 0.55 \\
RF\cite{breiman2001random}   & 0.77 & 0.67 & 0.67 & 0.69 & 0.77 & 0.63 & 0.64 & 0.69 & 0.75 & 0.63 & 0.63 & 0.64 & \textbf{0.76} & 0.58 & 0.59 & \textbf{0.64} \\
XGB\cite{chen2016xg}  & 0.73 & 0.61 & 0.61 & 0.59 & 0.77 & 0.65 & 0.65 & 0.67 & 0.72 & 0.58 & 0.58 & 0.59 & 0.69 & 0.45 & 0.46 & 0.48 \\
LR\cite{mccullagh1989}   & 0.80 & 0.71 & 0.72 & 0.74 & 0.80 & 0.68 & 0.69 & 0.72 & 0.77 & 0.66 & 0.67 & 0.70 & 0.73 & 0.57 & 0.58 & 0.63 \\
KNN\cite{cover1967}  & 0.81 & 0.73 & 0.73 & 0.75 & 0.79 & 0.67 & 0.68 & 0.71 & 0.75 & 0.60 & 0.61 & 0.66 & \textbf{0.76} & 0.58 & 0.59 & \textbf{0.64} \\
GNB\cite{george2013gaussian}  & 0.75 & 0.64 & 0.64 & 0.64 & 0.76 & 0.64 & 0.64 & 0.63 & 0.76 & 0.56 & 0.57 & 0.60 & 0.68 & 0.44 & 0.45 & 0.49 \\
SVM\cite{cortes1995}  & 0.79 & 0.69 & 0.70 & 0.74 & 0.80 & 0.68 & 0.69 & 0.71 & 0.75 & 0.63 & 0.63 & 0.61 & 0.74 & 0.57 & 0.58 & 0.63 \\
\addlinespace
TF(S)\cite{rajan_2022_cross_self} & 0.83 & 0.75 & 0.74 & 0.69 & 0.80 & 0.70 & 0.70 & 0.69 & 0.77 & 0.70 & 0.70 & 0.68 & 0.69 & 0.58 & 0.58 & 0.56 \\
TF(C)\cite{rajan_2022_cross_self} & 0.82 & 0.77 & 0.76 & 0.71 & 0.81 & \textbf{0.73} & 0.73 & 0.74 & 0.80 & \textbf{0.74} & 0.73 & 0.70 & 0.66 & 0.54 & 0.52 & 0.44 \\
\addlinespace
IIFDD\cite{chen_2024_iifdd} & 0.84 & 0.77 & 0.77 & 0.79 & 0.81 & 0.71 & 0.72 & 0.78 & 0.80 & 0.71 & 0.72 & 0.74 & 0.75 & \textbf{0.62} & \textbf{0.62} & 0.62 \\
MF-GCN  & \textbf{0.87} & \textbf{0.79} & \textbf{0.79} & \textbf{0.81} & \textbf{0.82} & \textbf{0.73} & \textbf{0.74} & \textbf{0.79} & \textbf{0.85} & \textbf{0.74} & \textbf{0.74} & \textbf{0.74} & 0.75 & 0.60 & 0.60 & 0.59 \\
\bottomrule
\end{tabular}

\caption{Comparison of Model Performance Across Modalities For 3 class classification (Subject-wise split using 10-Fold Cross-Validation)}

\caption*{\textbf{Model Abbreviations:} 
DT = Decision Tree, RF = Random Forest, XGB = XGBoost, LR = Logistic Regression,
KNN = k-Nearest Neighbors, GNB = Gaussian Naive Bayes, SVM = Support Vector Machine, 
GCN = Graph Convolutional Network, TF(C) = Cross-Attention Transformer, 
TF(S) = Self-Attention Transformer \\[8pt]
\textbf{Metric Abbreviations:} 
Sens = Sensitivity, Spec = Specificity, F2 = F2-Score, Pre = Precision} 
\label{tab:multi-class-10}
\end{table*}

\begin{table*}[ht]
\centering
\small
\setlength{\tabcolsep}{4pt}  
\begin{tabular}{l*{12}{c}}
\toprule
\multirow{2}{*}{\textbf{Model}} & \multicolumn{3}{c}{\textbf{Ensemble}} & \multicolumn{3}{c}{\textbf{Audio}} & \multicolumn{3}{c}{\textbf{Text}} & \multicolumn{3}{c}{\textbf{Video}} \\
\cmidrule(lr){2-4} \cmidrule(lr){5-7} \cmidrule(lr){8-10} \cmidrule(lr){11-13}
& \textbf{Recall} & \textbf{F2} & \textbf{Pre} & \textbf{Recall} & \textbf{F2} & \textbf{Pre} & \textbf{Recall} & \textbf{F2} & \textbf{Pre} & \textbf{Recall} & \textbf{F2} & \textbf{Pre} \\
\midrule
DT\cite{breiman1984decision} & 0.90 & 0.90 & 0.91 & 0.89 & 0.90 & 0.97 & 0.59 & 0.62 & 0.80 & 0.85 & 0.85 & 0.87 \\
RF\cite{breiman2001random} & 0.90 & 0.91 & 0.95 & 0.92 & 0.93 & 0.97 & 0.69 & 0.72 & 0.84 & 0.85 & 0.86 & 0.90 \\
XGB\cite{chen2016xg} & 0.85 & 0.84 & 0.80 & 0.85 & 0.87 & 0.96 & 0.66 & 0.68 & 0.80 & 0.85 & 0.84 & 0.80 \\
LR\cite{mccullagh1989} & 0.95 & 0.96 & \textbf{1.00} & 0.92 & 0.93 & 0.97 & 0.77 & 0.78 & 0.80 & 0.85 & 0.85 & 0.87 \\
KNN\cite{cover1967} & 0.95 & 0.96 & \textbf{1.00} & 0.92 & 0.93 & 0.97 & 0.77 & 0.77 & 0.78 & 0.85 & 0.85 & 0.87 \\
GNB\cite{george2013gaussian} & 0.90 & 0.92 & \textbf{1.00} & 0.89 & 0.90 & 0.97 & 0.65 & 0.66 & 0.72 & 0.85 & 0.85 & 0.87 \\
SVM\cite{cortes1995} & 0.95 & 0.96 & \textbf{1.00} & 0.92 & 0.93 & 0.97 & 0.73 & 0.75 & 0.84 & 0.85 & 0.85 & 0.87 \\
\addlinespace
TF(S)\cite{rajan_2022_cross_self} & 0.90 & 0.92 & \textbf{1.00} & 0.92 & 0.93 & 0.97 & 0.80 & 0.83 & 0.95 & \textbf{0.90} & \textbf{0.90} & 0.90 \\
TF(C)\cite{rajan_2022_cross_self} & 0.95 & 0.96 & \textbf{1.00} & \textbf{0.96} & \textbf{0.96} & 0.97 & 0.77 & 0.79 & 0.91 & 0.85 & 0.85 & 0.87 \\
\addlinespace
IIFDD\cite{chen_2024_iifdd} & 0.95 & 0.95 & 0.95 & \textbf{0.96} & \textbf{0.96} & 0.97 & 0.72 & 0.76 & 0.97 & 0.80 & 0.83 & \textbf{1.00} \\
MF-GCN & \textbf{0.95} & \textbf{0.96} & \textbf{1.00} & 0.92 & 0.93 & 0.97 & \textbf{0.77} & \textbf{0.81} & \textbf{1.00} & 0.85 & 0.86 & 0.90 \\
\bottomrule
\end{tabular}
\caption{Comparison of Model Performance Across Modalities For Binary classification (Using CMDC Dataset) }
\label{tab:cmdc}
\end{table*}

% \SH{Experimental Setup: To evaluate our model, we used 10 fold cross validation. We ensured that in each fold, train and test sets have mutually exclusive sets so that any any data leakage doesn't happen. 10 fold cross validation helps us to give a more comprehensive view of our model's generalizability than single train-test split method. As the evaluation metrics we are using are primarily used for binary classification, we used weighted average method to calculate them for multi-class classification.}

Experimental Setup: We optimize the proposed multimodal framework via the Adam \cite{KingBa15} optimizer, with the learning rate of $0.001$, training epoch of $500$ accompanying an early stopping of 50 patience and minibatch size of 16. The proposed model is implemented based on Tensorflow 2.0 \cite{sanchez2020review}, and the model is trained by using a single GPU (NVIDIA RTX 3090 with 24 GB memory). All the hyperparameters in Eqn \ref{eqn:overall_eqn} are empirically set using a grid search \cite{syarif2016svm}. To evaluate our model, we used 10-fold cross-validation. We ensured that in each fold, train and test sets have mutually exclusive sets so that any data leakage doesn't happen. 10-fold cross-validation helps us to give a more comprehensive view of our model's generalizability than the single train-test split method. We formulate the depression detection problem as a ``Multiclass Classification Task". Due to the limited number of collected samples currently we divide the entire dataset into only three classes, namely \textbf{Class 1}, \textbf{Class 2} and \textbf{Class 3}, where each class has there respective PHQ-9\footnote{PHQ-9 score ranges between 0 to 27, the larger the score the higher is the depression level.} score range. Concretely, the no depression cases (i.e., PHQ-9: 0-4) fall in the \textbf{Class 1}, the mild and moderate depression cases (i.e., PHQ-9: 5-14) are in \textbf{Class 2}, and severe depression patients (i.e., PHQ-9: 15-27) are in \textbf{Class 3}. This classification range is further discussed and verified by the professional doctors in our team. As the evaluation metrics we are using are primarily used for binary classification, we used the weighted average method to calculate them for multi-class classification.

The quantitative results of the proposed framework and ten competing methods in the task of depression classification are reported in Table \ref{tab:binary-class-10} to Table \ref{tab:multi-class-10}. In each table, we use the term feature ensemble to concatenate the uni modal features i.e., video, audio, and gaze. This feature concatenation or ensemble does not consider the dependence among different modalities and treat features from each modality independently. In contrast, our proposed method (\textbf{MF-GCN}, bottom row of every table) exploit the correlation among various modalities (e.g., video, audio and gaze) through our cross-modality learning module with the help of graph convolution layer and make predictions based on both the unimodal features and cross-modality features. This helps our proposed multimodal method to significantly outperform the seven machine learning algorithms that solely rely on the ensemble features or concatenated features from different modalitie,s and also beats the deep learning methods that uses cross modality. 

% Specifically, we have the following interesting observations: 

Our cross-modality learning methods (i.e., \textbf{MF-GCN}, \textbf{IIFDD}, \textbf{TF(S)} and \textbf{TF(C)}) generally achieve better performance in terms of five metrics, compared with seven traditional machine learning methods.

In Table \ref{tab:binary-class-10}, binary classification results using 10 fold cross validation were reported. For single modality features, our model might not perform best across all metrics but gives consistent performance. For video features, our model achieves the highest specificity and precision. For Gaze features, we excel at precision. However, for ensemble features, we can see that our model outperforms all other methods in Sensitivity and F2 Score with a score of $0.96$ and $0.94$ respectively. This highlights our model's ability to correctly identify positive cases with minimum false positives. For binary classification, the second best model can be considered IIFDD as it achieves the second best Sensitivity and F2-Score.

Multiclass classification results using 10 fold cross validation were reported in Table \ref{tab:multi-class-10}. The results from this table resonate with our findings from the Table \ref{tab:binary-class-10}. However, in this case, our model outperforms every method across all modalities except at gaze features, where IIFDD performs better in sensitivity and F2 Score. When all modalities are considered, we can see that we gain significant performance improvement over any other methods. We achieve a Sensitivity score of $0.79$ and Specificity score of $0.87$, where the second best model \textbf{IIFDD} achieves respectively $0.77$ and $0.84$.

% For example, in terms of \textbf{ACC.} values, our method yields the performance improvement of xx\%, compared with the second best traditional machine learning method (i.e., XGBoost) in depression classification task. 

These results demonstrate that our proposed multimodal methods can learn diagnostic oriented salient cross-modality features more effectively, and also highlight the importance of these cross- modality features in depression detection. This is the key reason behind the performance boost for all five metrics for our method compared with traditional machine learning methods that rely solely on ensemble features and other deep learning methods. 

Our proposed multimodal framework outperforms the feature ensemble based existing baselines but it also generally outperform over all the unimodal features, especially in case of multiclass classification. This can be attributed to the MFFBM module. The module captures both inter class boundaries and distinctive high frequency patterns. This is reciprocated in Figure \ref{fig:roc_comparison} where we can see that class 1 which is more challenging to predict as it has overlapping features, our model still perform the best. For the more simpler task of binary classification, the performance gap between our model and others is less pronounced but our model still achieved highest sensitivity which is the most critical metric in clinical context.
% From Table \ref{tab:overall_result} we can observe that, our multimodal method also significantly outperform the traditional machine learning based methods that utilizes only the single modality features. 
Also, we observe that the feature ensemble based approach generally outperforms their single-modality counterparts (i.e., individual video, audio or gaze). For instance, both DT and XGBoost methods that ensemble unimodal features (e.g., video, gaze and audio) are superior to the unimodal DT and XGBoost (see coloum namely \textbf{Video}, \textbf{Audio}, \textbf{Gaze} of Table \ref{tab:multi-class-10}) which only use modality specific data. This implies that taking advantage of multimodal data (as we do in this work) helps promote the diagnosis performance.

% \SH{To find out, our model does not exhibit bias to the data we collected, we evaluated performance of every method using two external datasets: EATD-Corpus and CMDC.}

To find out, our model does not exhibit bias to the data we collected, we evaluated the performance of every method using the CMDC\cite{zou_2022_CMDC_dataset} dataset.

% \SH{EATD-Corpus\cite{shen_2022_EATD_dataset}: This dataset consists of audio and text transcripts extracted from 162 Tongji University student interviews. They were required to answer three randomly selected questions and complete an SDS questionnaire. For Chinese people, an index SDS score (i.e. raw SDS score x 1.25) greater than or equal to 53 indicates that the person is depressed. In this dataset, 30 depressed and 132 non-depressed participants were present. 3 fold cross validation was used to evaluate this model.}

CMDC: The CMDC dataset contains semi-structured interviews with MDD patients from China and provides audio, visual and textual features derived from those interviews. During the interviews, they were asked 12 questions. The questionnaire responses are transcribed and included in the dataset. 78 samples (26 depressed and 52 healthy) are presented in this dataset, where every subject was audio recorded, and only 45 (19 depressed and 26 healthy) were both audio and video recorded. We used 5 fold cross validation to evaluate this model.

The number of samples in the CMDC dataset is quite small and depressed samples are much less than the healthy ones. To address this, we employ a data augmentation process\cite{chen_2024_iifdd}. Given that each sample's data consists of a sequence of 12 responses, we generate new training samples by randomly shuffling the order of these responses. This process is first applied 12 times to all samples which is followed by a second application only on depressed samples. Depressed samples are also augmented 12 times.

% \SH{Evaluated performance of EATD-Corpus Dataset is presented in Table \ref{tab:eatd}. This dataset is highly imbalanced. \JA{Need to include information about dataset} \SH{\textbf{\textit{Done}}} As a result, every method performs poor. In spite of that our model achieves the higest RECALL score and second best F2-Score across all modalities and ensemble features.}

% \SH{CMDC dataset results are given in Table \ref{tab:cmdc}. Our model demonstrates superior and consistent across all modalities. It achieves highest performance along with the Cross-Attention Transformer with a Recall and F2-Score of 0.95 and 0.96 respectively.}

CMDC dataset results are given in Table \ref{tab:cmdc}. Our model demonstrates superior and consistent performance across all modalities. It achieves the highest performance along with other models with a Recall of 0.95, F2-Score of 0.96 and Precision of 1.00. For single modalities, our model shows competitive performance with the highest results in text with Recall and F2 of 0.77 and 0.81, respectively while maintaining solid performance in audio and video.

The lower performance when using only textual features can be attributed to the loss of non verbal information when audio was transcribed to text. Unlike the involuntary and non verbal cues that acoustic or visual features have, words are consciously articulated and may be semantically guarded or ambiguous.

Therefore, these results imply the superiority of our proposed multimodal framework to help to boost the overall performance of depression identification.

% Finally, our unimodal method, \textbf{Ours (Uni.)} i.e., does not utilize the cross modality learning, achieve commendable performance that marginally outperform the six competing baseline methods. This is because \textbf{Ours (Uni.)} method is similar to the traditional machine learning methods except we harness the power of deep neural network to diagnosis depression with modality specific features (e.g., video, audio, gaze). 

\begin{figure*}[htbp] 
    \centering
    \begin{subfigure}[b]{0.4\linewidth}
        \centering
        \includegraphics[width=\linewidth, trim={0.2cm 0.2cm 0.2cm 0.2cm}, clip]{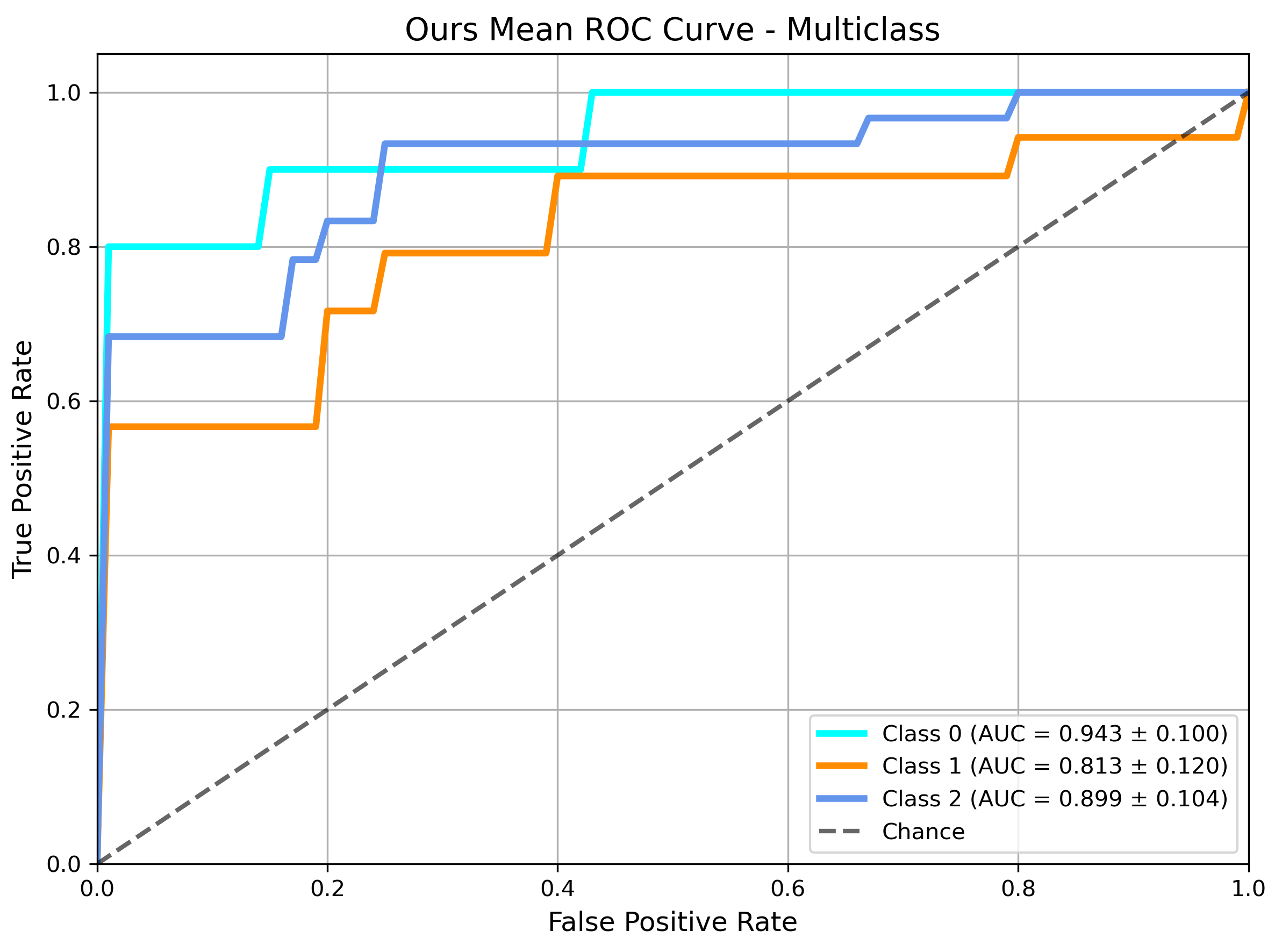}
        \caption{ROC Curve analysis of our proposed multimodal model}
        \label{fig:roc_proposed}
    \end{subfigure}%
    \hspace{0.01\linewidth}
    \begin{subfigure}[b]{0.4\linewidth}
        \centering
        \includegraphics[width=\linewidth, trim={0.1cm 0.2cm 0.2cm 0.2cm}, clip]{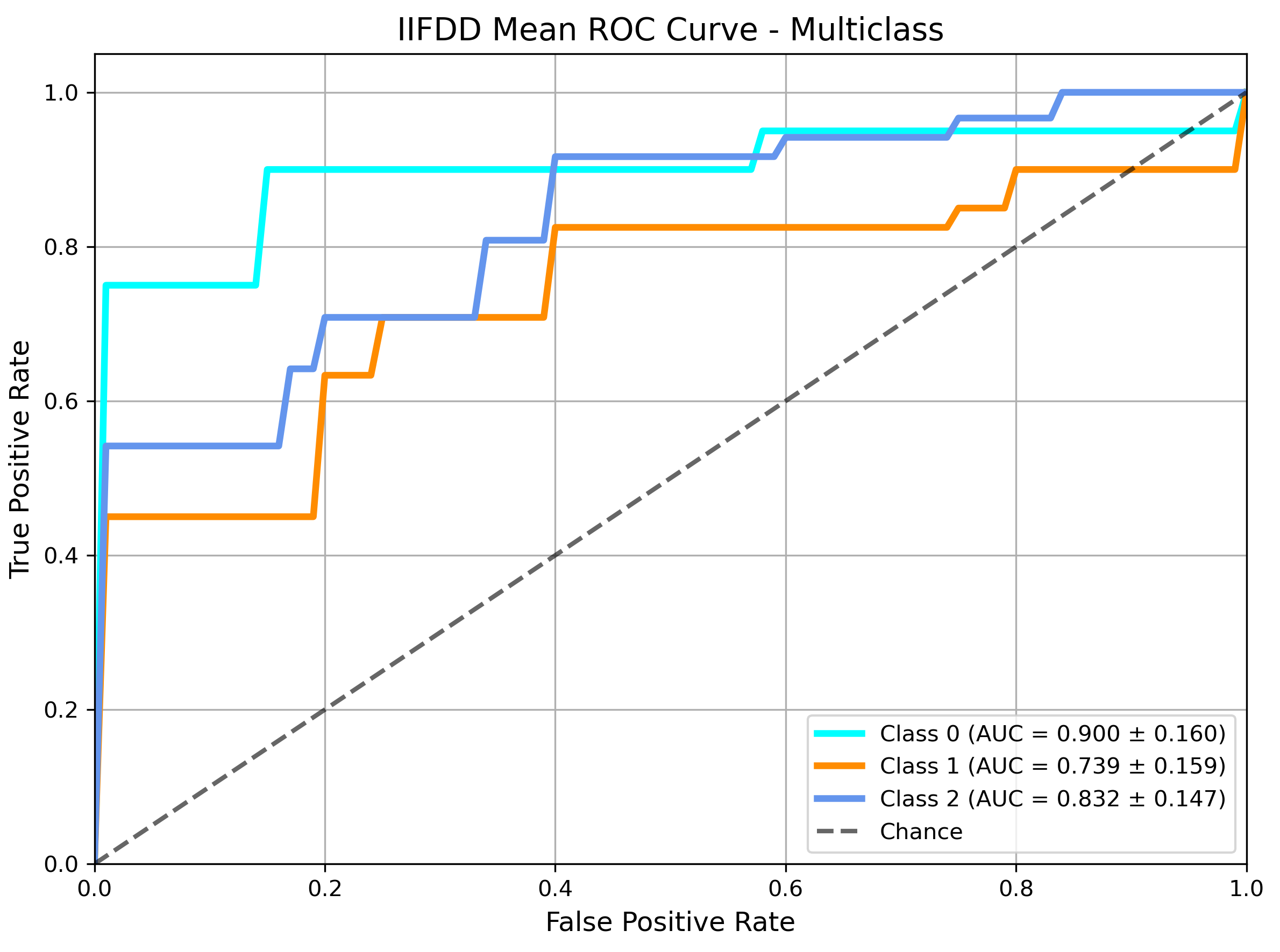}
        \caption{ROC Curve analysis of IIFDD}
        \label{fig:roc_baseline}
    \end{subfigure}
    \caption{Comparison of ROC curves between (a) our proposed multimodal approach and (b) second best competing method i.e., IIFDD, and related area values of individual classes, the higher the area the better the result}
    \label{fig:roc_comparison}
\end{figure*}

\section{ROC Curve Analysis}

ROC Curve plot was updated. This plot is based on multiclass classification

% \begin{figure}
%   \centering
%   \begin{subfigure}[a]
%     \includegraphics[width=.85\linewidth, trim={2.8cm 1.2cm 7cm 3cm}, clip]{}
%     \caption{ROC Curve}
%     \label{fig:roc}
%   \end{subfigure}
%   \hfill{}
%   \begin{subfigure}[b]
%     % \includegraphics[width=0.85\linewidth, trim={0.3cm 0.4cm 0.3cm 0.4cm}, clip]{Images/Ablation Study new_1.png}
%         \includegraphics[width=0.9\linewidth, trim={0.2cm 0.3cm 0.2cm 0.3cm}, clip]{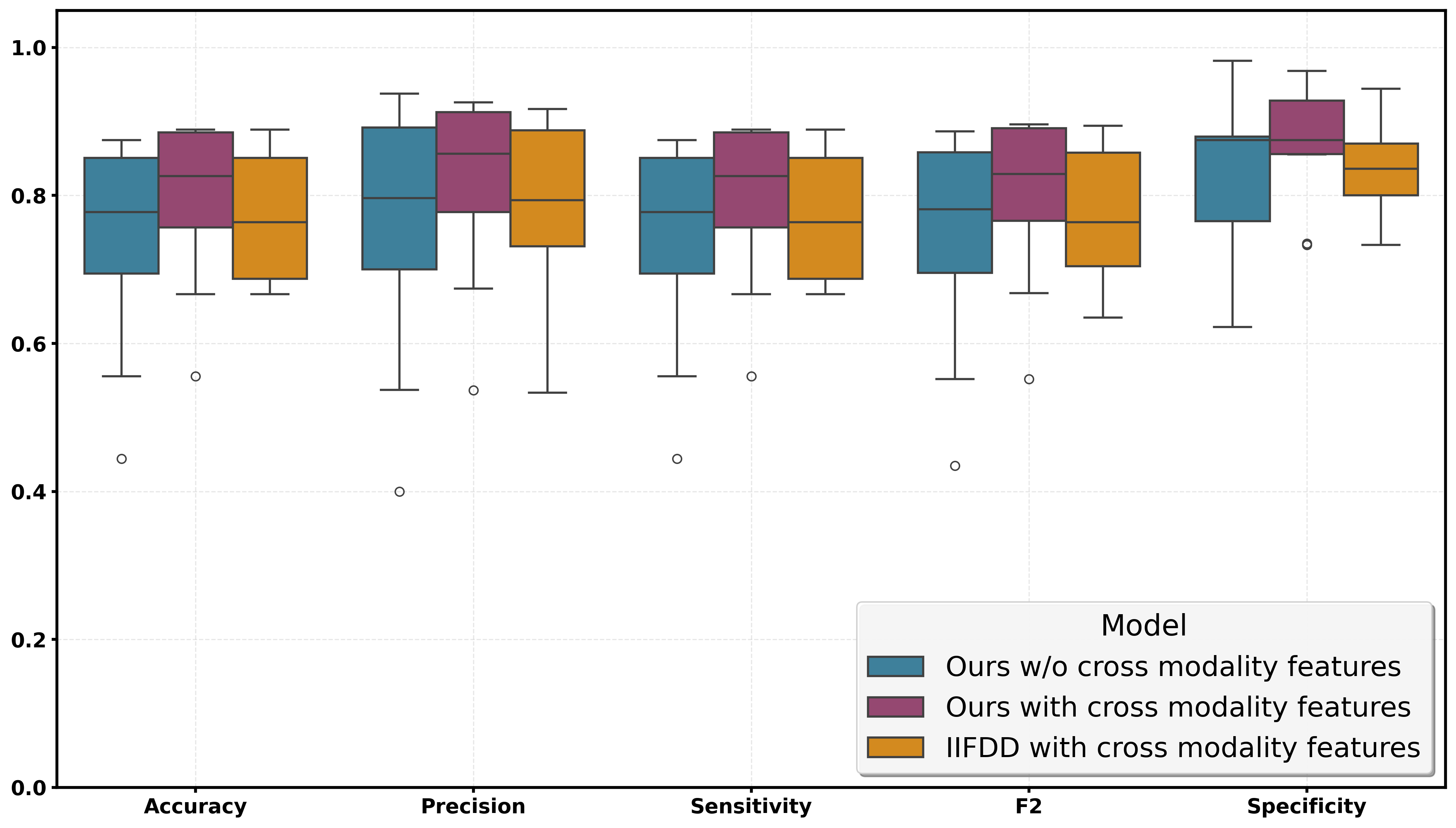}
%     \caption{Box Plot}
%     \label{fig:ablation}      
%   \end{subfigure}
%   \caption{(a) ROC Curve analysis of our proposed multimodal model vs second best competing method i.e., IIFDD and releated area values of individual classes, the higher the area the better the result. (b) Analyse the effect of our cross modality features with the help of four widely used evaluation metrics}
%   \label{fig:roc_and_ablation}
% \end{figure}

In this section, we analyze the Receiver Operating Characteristic (\textbf{ROC}) curves for our proposed method (\textbf{MF-GCN}) in comparison with the second best method \textbf{IIFDD}, from the Table \ref{tab:multi-class-10}. Although \textbf{ROC} curves are typically designed for binary classification tasks, beyond the binary classification in this work we are performing multiclass depression classification, hence we extended their application to a multiclass setting by adopting a \textit{``one-vs-all"} approach. This method involves creating a binary \textbf{ROC} curve for each class (see Figure \ref{fig:roc_comparison}, treating it as a positive class while grouping all other classes as the negative class. To generate \textbf{ROC} curves for our multiclass task, we calculated the true positive rate (\textbf{TPR}) and false positive rate (\textbf{FPR}) for varying threshold levels across each class. Using the \textit{``one-vs-all"} strategy, we treated each class individually against all other classes, effectively transforming the multiclass problem into multiple binary classification tasks. With these \textbf{TPR} and \textbf{FPR} values, we plotted the \textbf{ROC} curve for each class and calculated the area under the curve (\textbf{AUC}). The \textbf{AUC} values were then used to assess and compare the performance of the two methods—our proposed method (\textbf{MF-GCN}) and \textbf{IIFDD}. The \textbf{AUC} values derived from the \textbf{ROC} curves provide a quantitative measure of each algorithm's ability to distinguish between classes. Higher \textbf{AUC} values indicate better discrimination power, with a value of $1$ signifying perfect classification and a value of $0.5$ representing a random guess.

Regarding the ROC curves for our proposed method and IIFDD in Figure \ref{fig:roc_comparison}(a) \& (b), a key observation is our model's exceptional performance for Class 0 which is the minority class. Our model achieved a remarkable AUC of $0.943$ against IIFDD's $0.9$. This indicated our model can effectively capture discriminative features even from a small sample. The most challenging case was class 1 for both models. We got an AUC score of $0.81$ while IIFDD scored only $0.74$. Although class 1 is the majority class, the relatively lower AUC scores indicate that this class has overlap of features from both class 0 \& class 2 and the discriminative features of class 1 is not that pronounced. However, our model still showed high performance in this challenging case while IIFDD struggled. We can see similar significant margin for class 2 as well where our method and IIFDD got AUC of $0.9$ and $0.83$ respectively. We can conclude from this curve that our method has a robust generalization capability and provides stability which can be seen by the superior AUC scores in each class regardless of the class imbalance.

\section{Ablation Study}

\begin{figure}[htbp]
    \centering
    \includegraphics[width=1\linewidth, trim={0.2cm 0.2cm 0.2cm 0.2cm}, clip]{}
    \caption{ Analyze the effect of our cross modality features with the help of five evaluation metrics}
    \label{fig:ablation}
\end{figure}

Updated the ablation study figure \ref{fig:ablation} and writing accordingly

In this ablation study, we aim to evaluate the impact of the cross modality learning module on the performance of our model. The cross modality learning module is designed to capture interactions between distinct modalities, leveraging the complementary information they provide. To validate its effectiveness, we conducted a comprehensive ablation study, comparing the performance of our model with and without this module (see Figure \ref{fig:ablation}, Ours cross modality features and Ours w/o cross modality features respectively). Additionally, we included the performance metrics for the second best model \textbf{IIFDD} from the Table \ref{tab:multi-class-10} to provide a comparative benchmark. In  detail, from \ref{fig:ablation}, we first removed the cross modality learning module from our proposed architecture, namely ours w/o cross modality features which is similar to only using a deep neural netweork, allowing us to observe the performance when the model relies solely on modality-specific concatenated features. We then computed the performance metrics, including accuracy, precision, recall, specificity and F2 score, to evaluate the impact of this removal. We ran the same performance metrics on \textbf{IIFDD}, which served as a baseline. This allowed us to compare the effectiveness of the reduced model against a standard benchmark.  Finally, we reintroduced the cross modality learning module to our proposed architecture (see Figure \ref{fig:ablation}, Ours with cross modality features) and calculated the same performance metrics to assess any improvements. The results were plotted in a box plot, which provides a comprehensive visualization of the data distribution across different setup. A box plot effectively illustrates the distribution, median, inter-quartile range and potential outliers, offering a concise summary of the central tendency and variability of each metric for each model. 

The ablation study results indicated a significant increase in all performance metrics consistently when the cross modality learning module was included. The increased performance with the cross modality learning module can be attributed to its ability to capture cross-modal interactions and leverage complementary information between distinct modalities. This enhanced capability allows the model to learn a richer representation of the data, leading to more accurate and reliable predictions. If IIFDD is taken into account which also has cross-modality features, we can see that it has a lower median across all metrics. However, it scores higher than our model without cross-modality. Also, we can see a higher variability for IIFDD than our method with cross- modality features across all metrics. This suggests that our method with cross modality features has a better generalization capability than IIFDD. In conclusion, this ablation study effectively demonstrates the impact of the cross modality learning module, empirically validating its effectiveness. By systematically comparing the model's performance with and without this module, we have established that including cross modality interactions contributes to a substantial improvement across key performance.

\section{Conclusion}

This work presents a significant advancement in depression detection through the introduction of a comprehensive trimodal dataset and novel analytical framework. Our gold standard dataset of 103 participants, combining audio, video, and eye-tracking data with psychiatrist-validated PHQ-9 scores, addresses critical gaps in existing depression detection practices. 
Our theoretical and empirical contributions demonstrate that high-frequency spectral information, previously overlooked in graph-based approaches, contains essential diagnostic features for depression detection. The proposed framework achieves 90\% accuracy in depression classification, substantially outperforming existing methods. Furthermore, our systematic evaluation of saliency-based approaches in multimodal contexts yields significant improvement in F1-score compared to unimodal methods, highlighting the complementary nature of different modalities in capturing subtle depression indicators.

The theoretical analysis validating that our module enables arbitrary spectral filter learning represents a fundamental advancement over conventional graph convolutional networks' fixed filtering limitations. This spectral filtering capability, validated across multiple machine learning algorithms and deep learning models, demonstrates statistically significant improvements in depression detection across all severity levels.

These contributions collectively establish a new foundation for multimodal depression detection research, providing both the data resources and analytical tools necessary for more accurate and comprehensive mental health assessment.

\bibliographystyle{naturemag}
\bibliography{refs}
% \section*{Supplementary Material}

% Supplementary material that may be helpful in the review process should
% be prepared and provided as a separate electronic file. That file can
% then be transformed into PDF format and submitted along with the
% manuscript and graphic files to the appropriate editorial office.

\end{document}